\documentclass{article} 
\usepackage{iclr2026_conference,times}
\iclrfinalcopy
\usepackage{graphicx}
\usepackage{xcolor}
\usepackage{booktabs}
\usepackage{amsmath,amssymb,amsthm}
\usepackage{subcaption}
\newtheorem{assumption}{Assumption}[section]

\newtheorem{lemma}{Lemma}[section]
\newtheorem{proposition}{Proposition}[section]
\newtheorem{corollary}{Corollary}[section]
\theoremstyle{remark}
\newtheorem{remark}{Remark}[section]

\definecolor{retainpurple}{RGB}{120, 70, 180}
\definecolor{forgetgreen}{RGB}{40, 150, 90}

\usepackage{amsmath,amssymb,amsfonts,bm}

\def\eqref#1{equation~\ref{#1}}

\def\1{\bm{1}}

\DeclareMathAlphabet{\mathsfit}{\encodingdefault}{\sfdefault}{m}{sl}
\SetMathAlphabet{\mathsfit}{bold}{\encodingdefault}{\sfdefault}{bx}{n}

\usepackage{hyperref}
\usepackage{url}

\title{Neuralyzing the Trace: Selective Representation-Level Unlearning with Contrastive Sparse Autoencoders}

\author{
Itai Zehavi\textsuperscript{1,2,3}
\quad
Fanny Jourdan\textsuperscript{4,1}
\quad
Ulrich Aïvodji\textsuperscript{2,1}
\\
\textsuperscript{1}Mila
\quad
\textsuperscript{2}ILLS
\quad
\textsuperscript{3}École normale supérieure Paris-Saclay
\quad
\textsuperscript{4}IRT Saint Exupéry
\\
\texttt{itai.zehavi@ens-paris-saclay.fr}
}

\begin{document}

\maketitle

\begin{abstract}
Machine unlearning aims to remove targeted information while preserving a
model's other abilities. In realistic settings, such as privacy requests
under the EU GDPR, the target may be narrow, for example information
associated with a single person. Behavioral forgetting alone may be
insufficient, motivating interventions directly on internal
representations. However, standard mechanistic-interpretability extractors are poorly
selective for such targets. We identify an \emph{energy bias} in
reconstruction-based extraction, which favors dominant background
structure over low-energy target-specific components. We introduce \textsc{SCALPEL}, a contrastive sparse autoencoder designed
to learn more selective forget features. We show theoretically that
contrastive training promotes target-selective features and that our
selection score controls expected background knowledge perturbation. We validate \textsc{SCALPEL} experimentally on TOFU across Qwen, Llama,
and Gemma, where it substantially improves over NMF and standard SAE
interventions and is competitive with Gradient Difference and RMU,
bridging mechanistic interpretability and fine-grained unlearning.
\end{abstract}

\section{Introduction}
\label{sec:intro}

\begin{figure}[t]
    \centering
    \includegraphics[width=\textwidth]{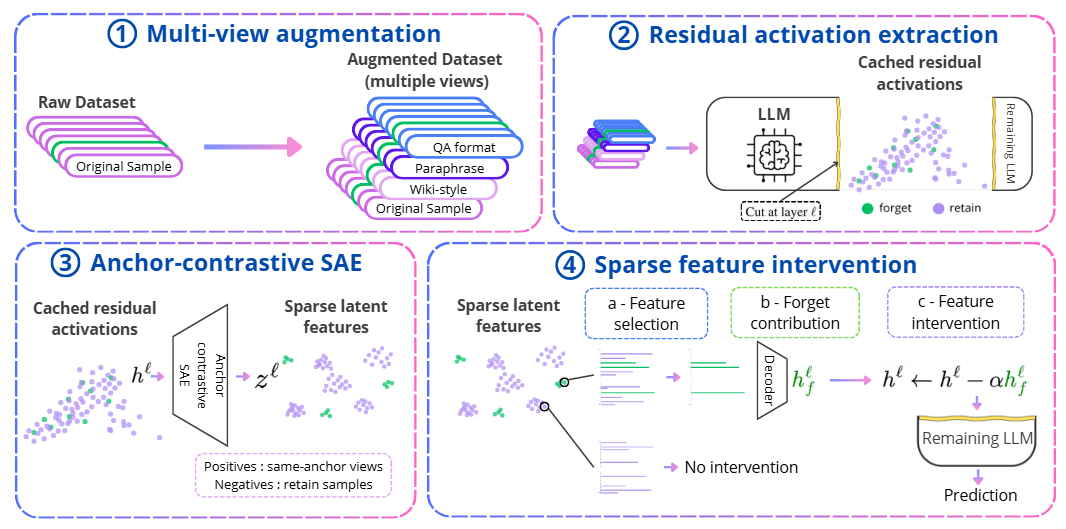}
    \caption{
    Overview of the \textsc{SCALPEL} pipeline.
    \circnum{1} We generate multiple semantic views of each forget anchor,
    then \circnum{2} cache residual-stream activations at layer $\ell$.
    \circnum{3} An anchor-contrastive SAE learns sparse features using
    same-anchor views as positives and \textcolor{retainpurple}{retain
    samples} (purple) as negatives, so that \textcolor{forgetgreen}{forget
    samples} (green) become isolated from retain information in the sparse
    feature space.
    \circnum{4} We select forget-specific features, decode their
    contribution \textcolor{forgetgreen}{$h_f^\ell$}, and subtract it from
    the residual stream before the remaining layers produce the
    prediction. The same selected feature set is applied to every input, while subtraction occurs only when those features activate. The intervention therefore acts as an input-dependent internal filter intended to suppress forget-related components while limiting its effect on retain activations.
    }
    \label{fig:method_overview}
\end{figure}

Once a large language model (LLM) has learned a piece of information, it is hard to take back.
Yet a private fact, a copyrighted passage, or a narrow sensitive association may
need to be removed without retraining from scratch. This is the goal of machine unlearning, and it matters directly for privacy, model governance, and legal requirements such as the right to
erasure (``right to be forgotten'') established in Article~17 of the EU
General Data Protection Regulation (GDPR)~\citep{eu2016gdpr}.

Most existing methods treat unlearning as an optimization problem on the
model weights: gradient-difference methods ~\citep{bu2024ngdiff}, retain-only finetuning,
representation steering such as RMU~\citep{huutien2024adaptiveRMU}, or in-context
unlearning~\citep{pawelczyk2023incontextunlearning}.
These methods can make the model stop answering forget-set queries. But
they leave a basic question open: did the model remove the
knowledge, or did it only learn not to express it?

If the internal representation survives,
the knowledge can resurface under paraphrases, indirect prompts,
jailbreak-style queries, or a small amount of relearning. Recent work on
parametric knowledge traces shows that some unlearning methods change
behavior while leaving much of the underlying trace
intact~\citep{hong2025parametrictraces}. This motivates a more direct approach: identify target-associated
representations inside the model and intervene on them explicitly.

Mechanistic interpretability provides natural tools for this. Transformer
computation can be read through the residual stream, where information accumulates
across layers~\citep{elhage2021framework}, and concept-extraction methods such as
PCA, NMF, and sparse autoencoders (SAEs) decompose activations into interpretable
directions~\citep{fel2023holistic,jourdan2023cockatiel,cunningham2023sparse,bricken2023monosemanticity,gao2024scalingSAE}.
But these tools work best when the target is a broad concept already well separated
in activation space~\citep{farrell2024saeunlearning,patil2024neurosurgeon}, whereas
realistic requests are far more precise: forgetting one author is not removing a
whole topic. In this regime we find that standard PCA- and SAE-based interventions
are poorly selective, and the features they remove damage retain knowledge almost as
much as forget knowledge.

One important cause lies in the
training objective. Indeed, Reconstruction-based extractors are rewarded in
proportion to the \emph{energy} (dispersion) of what they capture, and a specific
forget target is a low-energy (low dispersion) component hidden inside a much larger
background (high dispersion) (see Section~\ref{sec:energy-bias}). The extractor is therefore more strongly rewarded for modeling the background than for isolating the target.

We address this with \textsc{SCALPEL}, a contrastive sparse autoencoder
built on a simple hypothesis: specific knowledge is easier to isolate when
the extractor is trained to focus on what stays stable across many
views of the same target. We augment each forget anchor into paraphrases
and rewritten variants, treat views of the same anchor as positives, and
draw negatives from retain data rather than from other forget examples.
This avoids pushing related forget examples apart and pulls
target-specific structure away from shared language and task features.
Once trained, the SAE provides a fixed set of forget-associated sparse
features. The same set is used for every input, while each feature is
subtracted only when activated by the encoder. This produces an explicit
and input-dependent internal filter with tunable strength
(Figure~\ref{fig:method_overview}).

The goal of this paper is not to solve machine unlearning in full or to
provide certified deletion from the model parameters. Instead, we take a
first step toward connecting mechanistic interpretability with targeted
unlearning by studying whether internal features can be extracted with
enough selectivity to support narrow forget requests.

\paragraph{Contributions.}
We study this question in a controlled author-level setting on TOFU and
show that adapting the \emph{feature-learning objective} can produce
selective internal features that support competitive representation-level
unlearning while remaining explicit and inspectable. Our contributions are:
\begin{itemize}
    \item We identify and formalize a \textbf{selectivity bottleneck}:
    reconstruction-trained extractors are energy-weighted and may therefore
    underrepresent low-energy forget targets
    (Section~\ref{sec:energy-bias}).

    \item We introduce \textbf{\textsc{SCALPEL}}, an anchor-contrastive sparse
    autoencoder that learns features stable across augmented views of a target
    and weak on background data, together with a \textbf{norm-aware selection
    criterion} accounting for both encoder activation and decoder-direction
    magnitude (Sections~\ref{sec:scalpel}--\ref{sec:feature-selection}).

    \item We provide a \textbf{theoretical account} of the method:
    reconstruction objectives are energy-weighted, adding contrastive learning
    favors more target-selective sparse representations, and high-score
    features admit a bound on expected background perturbation
    (Section~\ref{sec:theory}).

    \item We \textbf{validate the approach empirically} on TOFU across three
    model families, showing improved feature selectivity and competitive
    unlearning performance against both mechanistic-interpretability and
    optimization-based baselines (Section~\ref{sec:results}).
\end{itemize}

\section{Related Work}
\label{sec:related-work}

\paragraph{General approaches to LLM unlearning.}
Unlearning removes the influence of a forget set $\mathcal{D}_f$ while
preserving behavior on a retain set $\mathcal{D}_r$; an auxiliary set
$\mathcal{D}_w$ (e.g.\ Wiki data) is often used to check general
knowledge~\citep{maini2024tofu,shi2024muse,dorna2025openunlearning}. A
first family updates the weights with gradient-based objectives. Gradient
Ascent raises the loss on forget examples but quickly damages the
model~\citep{maini2024tofu}; Gradient Difference adds descent on
$\mathcal{D}_r$ to limit the damage, with normalized
variants~\citep{maini2024tofu,bu2024ngdiff}; Negative Preference
Optimization replaces the unstable ascent term with a smoother
preference-based loss~\citep{zhang2024npo}; other approaches finetune the
model toward generic or incorrect answers~\citep{eldan2023harrypotter}.
Unlearning can also avoid weight changes: in-context unlearning conditions
the model at inference time~\citep{pawelczyk2023incontextunlearning},
which changes behavior but not the stored representation. Model editing
such as ROME updates parameters tied to one factual
association~\citep{meng2022rome}, though editing replaces an association
rather than removing an influence.

A particularly relevant baseline is Representation Misdirection for Unlearning
(RMU)~\citep{li2024wmdp,huutien2024adaptiveRMU}, which pushes forget activations at
a chosen layer toward a random direction while pinning retain activations to those
of the frozen original model. RMU shares our core principle act on internal
representations, not just outputs but never identifies an interpretable
representation of the target: it moves the whole forget activation and modifies the
weights. \textsc{SCALPEL} instead extracts a small set of sparse target-specific
features and intervenes only along their decoded directions. Comparing against RMU
therefore tests whether an interpretable, selective intervention can match a
stronger but opaque one.

Behavioral evaluation alone cannot establish that knowledge is gone: an internal
trace may remain recoverable through other prompts or light
finetuning~\citep{hong2025parametrictraces,yuan2024closerlook}. Representation-level
methods do not guarantee parametric deletion either, but they make the mechanism
explicit by causally linking behavioral change to an identified internal
intervention.

\paragraph{Mechanistic-interpretability-based unlearning.}
In transformers, a natural intervention point is the residual stream,
through which information flows between
layers~\citep{elhage2021framework,dar2022embedding,geva2022ffn,geva2023dissecting}.
Concept-decomposition methods express activations through a small set of
meaningful directions: PCA extracts high-variance orthogonal directions
and NMF factorizes non-negative activations~\citep{jolliffe2002pca,lee1999nmf};
we use NMF as a classical baseline whose components can be scored on
forget/retain data and used as intervention directions. Sparse
autoencoders offer a more flexible decomposition and have become a
standard tool for interpretable
features~\citep{cunningham2023sparse,bricken2023monosemanticity,gao2024scalingSAE,bussmann2024batchtopk,rajamanoharan2024jumprelu}.
Given an activation $h$, an SAE computes a sparse code $z(h)=E_\phi(h)$
and reconstructs $\widehat h = b_D + \sum_{q=1}^{Q} z_q(h)\,d_q$, where
$b_D$ is the decoder bias and $d_q$ the decoder direction of feature $q$;
the linear decoder associates each feature with an explicit direction in
residual space, making feature-level interventions decomposable and
inspectable.

A standard SAE unlearning pipeline trains the SAE, identifies
forget-related features from activation statistics, and modifies them at
inference. A direct intervention writes
$\widetilde h = h - \alpha \sum_{q\in S_f} z_q(h)\,d_q$
for a selected set $S_f$ and strength $\alpha$, with clamping or negative
scaling as variants~\citep{farrell2024saeunlearning}. An alternative is to
edit the sparse code and replace $h$ by the full SAE reconstruction, as in
LLM Neurosurgeon~\citep{zhou2025neurosurgeon}. That choice, however,
additionally injects the SAE's reconstruction error,
$\widetilde h - h = -\sum_{q\in S_f} z_q(h)d_q + (b_D + Dz(h) - h)$,
which in our experiments degraded retain and Wiki performance even before
any feature was removed. We therefore always intervene on the original
activation. SAE features can also constrain weight updates through
decoder-direction subspaces~\citep{wang2025saesubspace}; this suits broad,
globally separable topics, whereas our targets overlap heavily with retain
data and do not occupy a clean linear subspace. Existing MI-based unlearning methods generally reuse features learned by
generic reconstruction objectives; we instead study how to train the
extractor itself for the selectivity required by narrow forget targets.

The main limitation of these pipelines is not finding features that
activate on forget examples, but finding features that activate
\emph{specifically} on the target while remaining weak on shared
background structure. \textsc{SCALPEL} addresses this selectivity
bottleneck by adapting the feature-learning objective itself. The next
section explains why reconstruction objectives are poorly aligned with
this requirement.

\section{Energy Bias in Reconstruction-Based Extraction}
\label{sec:energy-bias}

\begin{figure*}[t]
    \centering
    \includegraphics[width=0.8\textwidth]{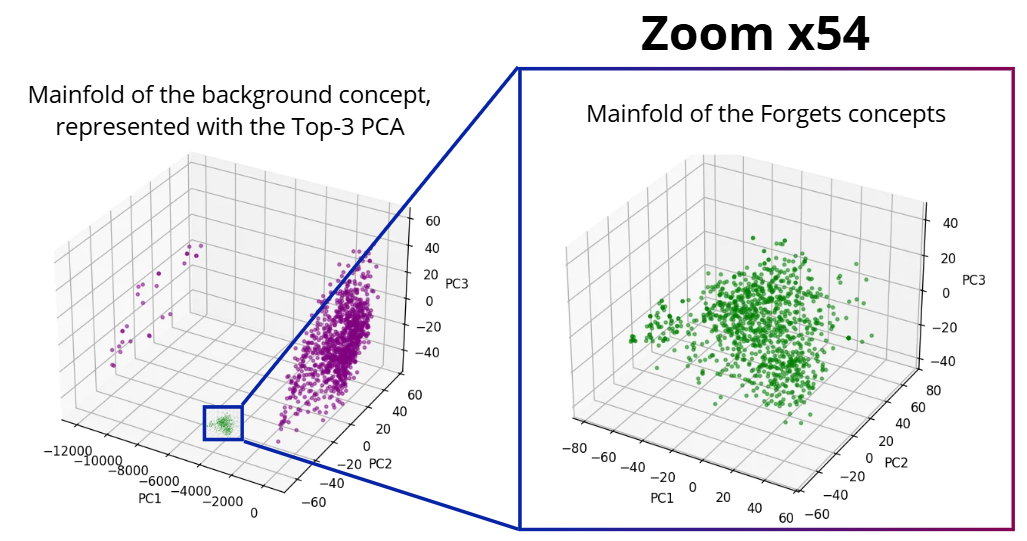}
    \caption{
    Energy comparison between dominant background structure and
    background-orthogonal forget residuals at layer $-4$ of Qwen.
    Left: background activations projected onto the first three principal
    components learned from retain and Wiki activations.
    Right: forget activations after removing these directions, shown in a
    second PCA basis fitted to the residuals. Since the panels use different
    bases, they should be compared through their scale and explained energy,
    not their coordinates. The background top-3 PCA energy is
    ${\sim}329\times$ larger.
    }
    \label{fig:energy-bias}
\end{figure*}

One important source of the selectivity bottleneck is an objective
mismatch: classical extractors optimize reconstruction, whereas targeted
unlearning requires target specificity. Following the additive
concept-manifold view of \citet{bhalla2026sparsemanifolds}, suppose an
activation decomposes as $h=\sum_{j=1}^{J}m_j$, with
$m_j\in\mathcal{M}_j$ (Mainfold j), and define its energy as
$E_j=\mathbb{E}[\lVert m_j\rVert_2^2]$. Under centered, mutually
uncorrelated concept components, the reconstruction loss of an extractor
$\Theta$ decomposes as
\[
\mathcal{L}_{\mathrm{rec}}(\Theta)
=
\sum_{j=1}^{J}E_j\bigl(1-C_j(\Theta)\bigr),
\]
where $C_j(\Theta)\in[0,1]$ is the fraction of concept $j$'s energy
captured by $\Theta$. Improving its capture by $\Delta C_j$ yields a
reconstruction gain of $E_j\Delta C_j$, so the objective assigns a larger
marginal gain to a background concept $\mathcal{M}_b$ than to a forget
concept $\mathcal{M}_f$ whenever
$E_b\Delta C_b>E_f\Delta C_f$
(Appendix~\ref{App:intro_th} and Figure \ref{fig:energy-bias}). This is a stylized account of the incentive
induced by reconstruction, not an impossibility result for trained SAEs:
a forget concept may remain important to the model while contributing
little variance. Consequently, PCA, NMF, and reconstruction-only SAEs are
more strongly rewarded for modeling high-energy structure, and a
low-energy but precise target may be blended into broader features.

We examine this on residual-stream activations from layer $-4$ of Qwen.
We fit PCA on the background activations
$H_{\mathrm{bg}}=H_r\cup H_w$, remove their top-three directions from
the forget activations, and fit a second PCA on the residuals
(Appendix~\ref{App:intro_th}). The corresponding top-three energies are
$\widehat E_{\mathrm{bg}}^{(3)}=3{,}787{,}695$ and
$\widehat E_f^{\perp\mathrm{bg},(3)}=11{,}517$, a factor of about $329$.
Although removing the background directions means that the residuals are
not a pure forget manifold, the gap is far larger than under a random
split of the same data. This is consistent with the proposed mechanism:
target-specific structure can contribute far less reconstruction energy
than the background from which it must be separated, motivating an
objective that explicitly rewards \emph{selectivity}.

\section{\textsc{SCALPEL}: Sparse Contrastive Autoencoders for Precise
Representation-Level Unlearning}
\label{sec:scalpel}

\textsc{SCALPEL} defines fine-grained forget targets, constructs multiple
semantic views, learns target-selective sparse features from cached
residual activations, and filters their decoded contribution at inference
(Figure~\ref{fig:method_overview}).

\subsection{Target granularity and multi-view construction}
\label{sec:target-granularity}

Targeted unlearning first needs a precise definition of a forget concept.
Let $\mathcal{D}_f,\mathcal{D}_r,\mathcal{D}_w$ be the forget, retain, and Wiki
datasets and $\mathcal{D}_{\mathrm{bg}}=\mathcal{D}_r\cup\mathcal{D}_w$. In TOFU,
forget and retain authors come from the same generation process and the split is
random, not semantic: they share writing style, factual structure, vocabulary, and
QA format, and their activations overlap heavily
(Appendix~\ref{app:UMAP}, Figure~\ref{fig:umap-target-granularity}). Treating the
whole forget set as one concept would push the extractor toward features of the
common TOFU format rather than the knowledge to remove. We instead define one
concept per forget author, $G_a=\{x\in\mathcal{D}_f:\operatorname{author}(x)=a\}$,
which matches the right-to-be-forgotten setting and gives several related samples
per target (sample-level granularity is discussed in
Appendix~\ref{app:limitations_full}).

For each target $G_a$ we build a set of augmented views
$\mathcal{V}_a=\{x_{a,1},\ldots,x_{a,m_a}\}$ using paraphrasing,
prompt-style rewriting with \texttt{Qwen2.5-7B-Instruct}, and
backtranslation~\citep{sennrich2016backtranslation,nllbteam2022nllb},
in the spirit of view-based contrastive
learning~\citep{chen2020simclr}. Before augmentation we extract the key
facts of the target (names, dates, titles, relations) and constrain them to remain unchanged, so the surface form varies while the
defining target facts are retained. The same recipe extends to document-level datasets such as
MUSE by treating each chunk as a target; we leave that evaluation to
future work.

\subsection{Residual-stream activation extraction}
\label{sec:activation-extraction}

We run the finetuned model and cache residual-stream activations
$h_t^\ell(x)\in\mathbb{R}^d$ at a fixed layer $\ell$, in practice the
fourth layer before the output: deep enough to carry high-level factual
information, but before the final layers specialize for next-token
prediction~\citep{yu2024hallucination,mabrok2026latentmanifolds}. We keep this layer fixed across all representation-based methods. We do
not claim that it is optimal, and a systematic study of layer selection
is left for future work. For a
dataset $\mathcal{D}_s$ and retained token positions $\mathcal{I}(x)$, we
write $\mathcal{H}_s=\{h_t^\ell(x): x\in\mathcal{D}_s,\,
t\in\mathcal{I}(x)\}$ for $s\in\{f,r,w\}$,
$\mathcal{H}_{\mathrm{bg}}=\mathcal{H}_r\cup\mathcal{H}_w$, and
$\mathcal{H}_{G_a}$ for the tokens of target $G_a$. Each activation
dimension is standardized with statistics from the training cache,
$\overline h = (h-\mu_H)\oslash\sigma_H$; both the SAE input and its
reconstruction live in this standardized space, which we found important
for robustness across model families.

\subsection{Contrastive sparse autoencoder}
\label{sec:contrastive-sae}

\paragraph{Sparse decomposition.}
All variants share the same base. The encoder maps a standardized
activation to a non-negative sparse code
$z(\overline h)=E_\phi(\overline h)\in\mathbb{R}_+^Q$, and a linear
decoder reconstructs
$\widehat{\overline h} = b_D + \sum_{q=1}^{Q} z_q(\overline h)\,d_q$.
Training combines the usual reconstruction loss
$\mathcal{L}_{\mathrm{rec}}=\mathbb{E}_{\overline h}[\lVert\overline
h-\widehat{\overline h}\rVert_2^2]$ over forget, retain, and Wiki activations with an
$\ell_1$ penalty $\mathcal{L}_{\mathrm{sparse}}=\mathbb{E}_{\overline
h}[\lVert z(\overline h)\rVert_1]$.

The raw coefficient $z_q$ alone does not measure a feature's decoded
contribution and is sensitive to reciprocal rescaling of the encoder and
decoder. We therefore use the scale-invariant norm-aware activation
$\widetilde z_q(\overline h) = z_q(\overline h)\,\lVert d_q\rVert_2
 \ge 0$,
and represent token $i=(x,t)$ by
$r_i = (\widetilde z_1(\overline h_i),\ldots,
        \widetilde z_Q(\overline h_i)) \in \mathbb{R}_+^Q$.
The contrastive objective operates directly on these token-level codes
through the cosine similarity
$\operatorname{sim}(i,j)
 = \langle r_i, r_j\rangle /
   (\lVert r_i\rVert_2 \lVert r_j\rVert_2)$;
no token averaging is done before the similarity (sample-level
aggregation is used only later, for feature scoring).

\paragraph{Token-level multi-positive InfoNCE.}
All cached activations are flattened into one pool of tokens, each
inheriting the concept identifier and split of its source sample; the
pool is shuffled and cut into minibatches each epoch, independently of
sample boundaries. For an anchor token $i$, positives $\mathcal{P}(i)$
are the other in-batch tokens sharing its concept identifier (another
token of the same sample, or a token of another view of the same
target), and negatives $\mathcal{N}(i)$ are in-batch tokens with a
different concept identifier that satisfy the split-specific contrastive
mask; depending on the variant, negatives may include retain tokens or Wiki tokens. Anchors with an empty positive
or negative set are dropped. With temperature $\tau>0$ and
$\kappa(i,j)=\exp(\operatorname{sim}(i,j)/\tau)$, define the positive and
negative masses
$A_i^+=\sum_{j\in\mathcal{P}(i)}\kappa(i,j)$ and
$A_i^-=\sum_{j\in\mathcal{N}(i)}\kappa(i,j)$, and the multi-positive
InfoNCE loss~\citep{oord2018infonce,khosla2020supcon}
\[
\ell_{\mathrm{NCE}}(i)
= -\log\frac{A_i^+}{A_i^+ + A_i^-},
\qquad
\mathcal{L}_{\mathrm{NCE}}
= \frac{1}{|\mathcal{I}_{\mathrm{valid}}|}
  \sum_{i\in\mathcal{I}_{\mathrm{valid}}}
  \ell_{\mathrm{NCE}}(i).
\]
This pulls together token codes of the same target and pushes them away
from other concepts and from background data. Crucially, negatives come
from retain data rather than from other forget examples, so related
forget samples are never pushed apart.

\subsection{\textsc{SCALPEL}}
\label{sec:scalpel-variants}

\paragraph{\textsc{SCALPEL}.}
Forget and background samples share a lot of structure, and forcing one latent code
to serve both reconstruction and separation can be restrictive. Inspired by
contrastive SAEs of ~\citep{poupart2024contrastiveSAE}, we split the code into two blocks
produced by two encoders, $z(\overline h)=[z^{\mathrm{com}}(\overline h),
z^{\mathrm{diff}}(\overline h)]$, decoded jointly. The common block captures
structure shared across samples ($\mathcal{L}_{\mathrm{com}}$); the target-specific
block contrasts views of the same forget target against retain and Wiki examples
($\mathcal{L}_{\mathrm{diff}}$). The objective is
$\mathcal{L}_{\mathrm{split}}=\mathcal{L}_{\mathrm{rec}}
+\lambda\mathcal{L}_{\mathrm{sparse}}
+\beta_{\mathrm{com}}\mathcal{L}_{\mathrm{com}}
+\beta_{\mathrm{diff}}\mathcal{L}_{\mathrm{diff}}$, and only features of the
target-specific block are eligible for intervention.

\subsection{Feature selection and residual-stream intervention}
\label{sec:feature-selection}

After training, we score each feature by how active it is on one target
relative to the background. With
$
\mu_q(G_a)
=
\mathbb{E}_{\overline h\in\overline{\mathcal H}_{G_a}}
[\widetilde z_q(\overline h)],
$ and $
\mu_q(\mathrm{bg})
=
\mathbb{E}_{\overline h\in\overline{\mathcal H}_{\mathrm{bg}}}
[\widetilde z_q(\overline h)],
$
the target-selectivity score is
\begin{equation}
s_q(G_a)
=
\frac{\mu_q(G_a)}
{\mu_q(\mathrm{bg})+\varepsilon}.
\label{eq:score}
\end{equation}
For each target, we select
$S_{G_a}=\{q:s_q(G_a)\geq s_{\min}\}$, with a top-feature fallback if
the set is empty. For a forget request containing several targets, the
fixed global intervention set is
$
S_f=\bigcup_{a\in\mathcal A_f}S_{G_a},
$
where shared features are included only once.

The same set $S_f$ is applied to every input, without access to author identity or
forget/retain labels; its effect stays input-dependent, since a selected feature is
subtracted only when the encoder activates it. As the SAE operates in standardized
space, we compute $\overline h^\ell=(h^\ell-\mu_H)\oslash\sigma_H$ and map the
decoded contribution back:
\begin{equation}
\widetilde h^\ell = h^\ell - \alpha\sum_{q\in S_f} z_q(\overline
h^\ell)\bigl(\sigma_H\odot d_q\bigr).
\label{eq:intervention}
\end{equation}
Modifying the original activation rather than replacing it with the full
reconstruction avoids injecting reconstruction error, and each component remains a
sparse activation times a decoder direction.

\section{Theoretical Analysis}
\label{sec:theory}

We formalize two properties of \textsc{SCALPEL}: adding contrastive
learning to reconstruction and sparsity favors more target-selective
features, and selecting such features limits their expected effect on
background activations. Complete assumptions and proofs are given in
Appendices~\ref{app:sparse_contrastive} and~\ref{App:score_unlearing}.

\paragraph{Contrastive learning promotes target-selective features.}
Write
$\mathcal L_{\mathrm{SAE}}=\mathcal L_{\mathrm{rec}}+
\lambda\mathcal L_{\mathrm{sparse}}$ and
$\mathcal L_{\mathrm{SCALPEL}}=\mathcal L_{\mathrm{SAE}}+
\beta\mathcal L_{\mathrm{NCE}}$.
Reconstruction maintains an informative representation and sparsity
concentrates it on few coordinates, but neither explicitly rewards
target--background separation. The contrastive term provides this
missing signal.

Let $\widetilde z_q(h)=z_q(h)\lVert d_q\rVert_2$ be the norm-aware code
used by InfoNCE and
$u_i=\widetilde z(h_i)/\lVert\widetilde z(h_i)\rVert_2$ its normalized
representation.

\begin{proposition}[Contrastive regularization favors target selectivity]
\label{prop:contrastive-selectivity}
Assume non-negative codes, controlled decoder scales, and that a sparse
target--background separation is compatible with comparable
reconstruction and sparsity cost. Among such representations, adding
$\mathcal L_{\mathrm{NCE}}$ favors
$\langle u_i,u_j\rangle\rightarrow1$ for same-target positives and
$\langle u_i,u_j\rangle\rightarrow0$ for background negatives.
More precisely, if the contrastive positive-to-negative mass ratio is
within a factor $1+\delta$ of its optimum, then
\[\frac{1}{|N(i)|}\sum_{j\in N(i)}\langle u_i,u_j\rangle\leq\tau\delta \qquad
\text{and} \qquad
\frac{1}{|P(i)|}\sum_{j\in P(i)}\lVert u_i-u_j\rVert_2^2
\leq\frac{2\delta}{(1+\delta)(1-e^{-1/\tau})}\]
\end{proposition}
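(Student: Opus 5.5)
The plan is to reduce everything to elementary bounds on the per-anchor InfoNCE ratio, using two facts. First, non-negativity of the codes confines cosine similarities to a bounded interval. Second, one inequality on $\exp$ in each direction converts near-optimality of the mass ratio into the two averaged bounds.

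\textbf{Step 1 (range of similarities and the optimum).} Since $z_q\ge 0$ and $\lVert d_q\rVert_2>0$ (the ``controlled decoder scales'' assumption), each $\widetilde z(h_i)$ lies in $\mathbb{R}_+^Q$. Hence $s_{ij}:=\langle u_i,u_j\rangle\in[0,1]$ and $\kappa(i,j)\in[1,e^{1/\tau}]$. Consequently
\[
A_i^+\le |P(i)|\,e^{1/\tau}, \qquad A_i^-\ge |N(i)|.
\]
So the positive-to-negative mass ratio $\rho_i=A_i^+/A_i^-$ satisfies $\rho_i\le\rho^\star:=|P(i)|e^{1/\tau}/|N(i)|$. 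Since $\ell_{\mathrm{NCE}}(i)=\log(1+1/\rho_i)$ is decreasing in $\rho_i$, this optimum is attained exactly when all positives have $s_{ij}=1$ and all negatives have $s_{ij}=0$.

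\textbf{Step 2 (qualitative part).} I would next show that this configuration is admissible under the stated assumption. A sparse target--background separation is one where target tokens share a support disjoint from the supports active on background tokens. Such a separation gives orthogonal non-negative codes across the split and, after normalization, identical directions within a target. By assumption it has reconstruction and sparsity cost comparable to the alternatives. Among the representations in this comparable-cost class, $\mathcal L_{\mathrm{SCALPEL}}$ differs only through $\beta\mathcal L_{\mathrm{NCE}}$, which is strictly smaller at the separated configuration. This yields the stated tendencies $\langle u_i,u_j\rangle\to1$ for positives and $\langle u_i,u_j\rangle\to0$ for negatives.

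\textbf{Step 3 (negatives).} Assume $\rho_i\ge\rho^\star/(1+\delta)$. Combining this with $A_i^+\le|P(i)|e^{1/\tau}$ gives
\[
\frac{A_i^-}{|N(i)|}\le 1+\delta .
\]
By Jensen's inequality for the convex function $\exp$,
\[
\exp\!\Big(\tfrac{1}{\tau}\,\tfrac{1}{|N(i)|}\textstyle\sum_{j\in N(i)} s_{ij}\Big)\le \frac{A_i^-}{|N(i)|}\le 1+\delta .
\]
Taking logarithms, the average negative similarity is at most $\tau\log(1+\delta)\le\tau\delta$.

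\textbf{Step 4 (positives).} Combining the assumption on $\rho_i$ with $A_i^-\ge|N(i)|$ gives
\[
\frac{A_i^+}{|P(i)|}\ge \frac{e^{1/\tau}}{1+\delta}.
\]
Here I need an upper bound on $\exp$, so I use its chord on $[0,1]$: $e^{s/\tau}\le 1+s\,(e^{1/\tau}-1)$. Averaging over positives, with $m$ the mean of $s_{ij}$ over $P(i)$, yields
\[
\frac{e^{1/\tau}}{1+\delta}\le 1+m\,(e^{1/\tau}-1).
\]
Rearranging gives
\[
1-m\le\frac{\delta\,e^{1/\tau}}{(1+\delta)(e^{1/\tau}-1)}=\frac{\delta}{(1+\delta)(1-e^{-1/\tau})}.
\]
For unit vectors, $\lVert u_i-u_j\rVert_2^2=2(1-s_{ij})$, so the average squared distance over positives is at most $2\delta/\big((1+\delta)(1-e^{-1/\tau})\big)$, which is the claimed bound.

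\textbf{Main obstacle.} The main obstacle is not Steps 3--4, which are routine once the bounds on $A_i^\pm$ are in place. It is making Step 2 honest. ``Comparable reconstruction and sparsity cost'' has to be formalized, for instance as the cost gap being smaller than $\beta$ times the InfoNCE gap, so that the contrastive term actually decides the minimizer. I would state this explicitly as the hypothesis rather than try to derive it.
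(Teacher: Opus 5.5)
Your proof is correct and follows the paper's: the same bounds $A_i^+\le|P(i)|e^{1/\tau}$ and $A_i^-\ge|N(i)|$ split near-optimality of the mass ratio into the two one-sided conditions, and your chord bound $e^{s/\tau}\le 1+s(e^{1/\tau}-1)$ is the paper's concavity inequality $1-e^{-x/\tau}\ge(1-e^{-1/\tau})x$ with $x=1-s_{ij}$. The only deviation is that you apply Jensen rather than $e^x\ge 1+x$ to the negatives, which gives the slightly sharper bound $\tau\log(1+\delta)\le\tau\delta$.
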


Thus, as the contrastive objective improves, views of the same target
share increasingly similar sparse coordinates while their overlap with
background codes decreases. Combined with sparsity, this favors features
that are active on a target and weak on the background, precisely the
structure measured by the selectivity score $s_q(G_a)$ in
Eq.~\eqref{eq:score}. Different forget targets may still share features
when they are not contrasted against each other.

\paragraph{Selectivity controls background perturbation.}
In standardized activation space, let
$\delta_{G_a}(\overline h)=
\alpha\sum_{q\in S_{G_a}}z_q(\overline h)d_q$.

\begin{proposition}[Selectivity controls background perturbation]
\label{prop:retain-damage}
If every selected feature satisfies $s_q(G_a)\geq s_{\min}$, then
\[\mathbb{E}_{\overline h\sim\overline{\mathcal H}_{\mathrm{bg}}}
[\lVert\delta_{G_a}(\overline h)\rVert_2]
\leq\alpha\sum_{q\in S_{G_a}}\mu_q(\mathrm{bg})
\leq\frac{\alpha}{s_{\min}}
\sum_{q\in S_{G_a}}\mu_q(G_a)
\]
\end{proposition}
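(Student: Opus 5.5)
The argument is a direct chain of elementary inequalities: the triangle inequality, then linearity of expectation, then the score definition Eq.~\eqref{eq:score} applied feature by feature.

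\emph{Step 1 (pointwise bound).} For a fixed standardized background activation $\overline h$, apply the triangle inequality to $\delta_{G_a}(\overline h)=\alpha\sum_{q\in S_{G_a}}z_q(\overline h)d_q$. Since the codes are non-negative, $z_q(\overline h)\ge 0$, and therefore
\[
\lVert\delta_{G_a}(\overline h)\rVert_2\le\alpha\sum_{q\in S_{G_a}}z_q(\overline h)\lVert d_q\rVert_2=\alpha\sum_{q\in S_{G_a}}\widetilde z_q(\overline h).
\]
Non-negativity matters here: it is what lets us write $|z_q|$ as $z_q$, so that the bound is expressed in the norm-aware activation $\widetilde z_q$ used by the score. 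Taking the expectation over $\overline{\mathcal H}_{\mathrm{bg}}$ and using linearity (the sum over $S_{G_a}$ is finite) gives the first inequality, with $\mu_q(\mathrm{bg})=\mathbb E_{\mathrm{bg}}[\widetilde z_q]$.

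\emph{Step 2 (use the selection threshold).} For each $q\in S_{G_a}$, the condition $s_q(G_a)\ge s_{\min}$ means $\mu_q(G_a)\ge s_{\min}(\mu_q(\mathrm{bg})+\varepsilon)\ge s_{\min}\,\mu_q(\mathrm{bg})$, because $\varepsilon\ge0$. Hence $\mu_q(\mathrm{bg})\le\mu_q(G_a)/s_{\min}$. Summing over $q$ and multiplying by $\alpha$ gives the second inequality. This step needs $s_{\min}>0$, which is implicit in the statement.

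\emph{Main subtlety.} There is no real obstacle; the only point requiring care is bookkeeping. The expectation must be taken over the same empirical background cache used to compute $\mu_q(\mathrm{bg})$, so that the middle term matches exactly. Likewise, the top-feature fallback must not add features that violate the threshold, since the statement assumes every selected feature satisfies $s_q(G_a)\ge s_{\min}$.

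One could also remark that the bound transfers to the original residual space. Applying it to $\sigma_H\odot d_q$ costs at most a factor $\max_k \sigma_{H,k}$, but this is not required for the proposition as stated.
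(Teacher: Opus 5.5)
Your proof is correct and follows the paper's argument step for step. You use the triangle inequality with non-negativity of the codes to get the pointwise bound $\alpha\sum_{q\in S_{G_a}}\widetilde z_q(\overline h)$, then linearity of expectation, then apply the threshold $s_q(G_a)\ge s_{\min}$ feature by feature. The only cosmetic difference is that the paper first records the intermediate bound $\frac{\alpha}{s_{\min}}\sum_{q\in S_{G_a}}\mu_q(G_a)-\alpha\varepsilon|S_{G_a}|$ and then drops the $\varepsilon$ term, whereas you discard $\varepsilon$ immediately; both arguments rely on $\alpha\ge 0$.
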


The bound controls expected representation-space perturbation, not
downstream behavior or worst-case effects. It also ignores the
orientation of $d_q$, so similarly scored features may have different
causal effects, as examined in Section~\ref{sec:score-results}.

Together, the results explain the intended mechanism: reconstruction
preserves information, sparsity concentrates it, contrastive learning
organizes the sparse code according to the target concepts, and the
resulting selectivity limits the expected background footprint of the
intervention.
\section{Experiments}
\label{sec:experiments}

We run one main benchmark and two diagnostics. The benchmark compares
representation-level unlearning performance against classical,
mechanistic-interpretability, and optimization-based baselines across
three model families. The diagnostics test whether contrastive training
produces more target-selective sparse-code geometry and whether the
resulting selectivity score predicts intervention quality.

\paragraph{Models and reference checkpoints.}
We use \texttt{Qwen2.5-1.5B-Instruct},
\texttt{Llama-3.2-1B-Instruct}, and
\texttt{Gemma-2-2B-it}~\citep{yang2024qwen25,grattafiori2024llama3,gemmateam2024gemma2}.
For each model family and seed, we produce two references: a
\emph{full} model $\theta_{\mathrm{full}}$ finetuned on
$\mathcal{D}_f\cup\mathcal{D}_r$, which is the starting point of every
unlearning method, and a \emph{retain-only} model
$\theta_{\mathrm{ret}}$ finetuned on $\mathcal{D}_r$ alone, which
approximates the desired endpoint. Within each seed, all methods start
from the same full checkpoint. Augmented views are generated separately
with \texttt{Qwen2.5-7B-Instruct}.

\paragraph{Dataset and augmentation.}
We evaluate on TOFU~\citep{maini2024tofu}, using Wiki data
$\mathcal{D}_w$ to measure general-knowledge preservation. For each
forget author, we construct augmented views using 16 templates, including
question--answer reformulations, biographies, Wiki-style passages,
paraphrases, multi-hop questions, and adversarially phrased
instructions~\citep{zou2023universal,dahal2026gone}. We then
backtranslate them through French, Spanish, and German with
NLLB-200~\citep{sennrich2016backtranslation,nllbteam2022nllb}.
The transformations vary wording and format while preserving the names,
dates, titles, and relations defining the target.

\paragraph{Compared methods.}
We compare \textbf{NMF}~\citep{lee1999nmf}; a \textbf{standard SAE} trained with
reconstruction and sparsity
only~\citep{cunningham2023sparse,farrell2024saeunlearning}; \textbf{contrastive
SAEs} with a single shared code, using one- or two-layer encoders;
\textbf{\textsc{SCALPEL}}; \textbf{Gradient
Difference}~\citep{bu2024ngdiff}; and \textbf{RMU}~\citep{huutien2024adaptiveRMU}.
All representation-based methods use layer $L-4$, the same token positions, and the
same selection and intervention procedure whenever the decomposition permits it, so
the comparison isolates the effect of the learned representation.

\paragraph{Protocol.}
Each method is tuned independently for every model family using
Optuna~\citep{akiba2019optuna}, with 25 trials per model--method pair
and the search spaces reported in Appendix~\ref{app:hpo}. The selected
configuration is frozen and rerun over five independent end-to-end seeds.
Each seed repeats reference-model finetuning, method training, feature
selection, and evaluation with
OpenUnlearning~\citep{dorna2025openunlearning}.

\paragraph{Metrics and trade-off curves.}
Our main trade-off compares TOFU model utility with forget-set
ground-truth answer probability~\citep{maini2024tofu}. We additionally
report Q--A probability and ROUGE on forget, retain, and Wiki data,
together with forget quality, truth ratio, privacy leakage, extraction
strength, and paraphrase-based evaluation
\citep{lin2004rouge,maini2024tofu,shi2024muse,
carlini2021extracting,dorna2025openunlearning}.
For methods exposing an intervention strength $\alpha$, we trace the
forget--retain trade-off using adaptive bisection in $\log\alpha$ and
interpolate the five seed-specific curves onto a common grid
(Appendix~\ref{app:alpha_search}). Complete multi-metric trade-offs are
reported in
Appendix~\ref{app:additional_benchmarks_tradeoff_curves}.

\paragraph{Diagnostics.}
The \emph{contrastive-geometry diagnostic}
(Section~\ref{sec:geometry-results}) compares the standard SAE,
contrastive SAE, and \textsc{SCALPEL}-Split across the five Qwen benchmark
seeds. On held-out activations, we normalize the norm-aware codes as
$u=\widetilde z/\lVert\widetilde z\rVert_2$. For each forget target
$G_a$, we measure the within-target similarity
$C_{\mathrm{pos}}(G_a)=
\mathbb{E}_{i,j\in G_a}[\langle u_i,u_j\rangle]$
and the target--background similarity
$C_{\mathrm{bg}}(G_a)=
\mathbb{E}_{i\in G_a,j\in\mathrm{bg}}[\langle u_i,u_j\rangle]$.
Higher $C_{\mathrm{pos}}$ and lower $C_{\mathrm{bg}}$ indicate a more
target-selective representation. We then compare feature selectivity
across methods to test whether this geometric separation translates into
more selective sparse features.

The \emph{score-quality diagnostic}
(Section~\ref{sec:score-results}) considers the top-5 feature--target
pairs for each of ten authors, giving 50 selections. Each pair is
evaluated alone by sweeping $\alpha$ and tracing its
$(P_{\mathrm{forget}},P_{\mathrm{retain}})$ trajectory, which we compare
across selectivity-score quantile bins.

\section{Results}
\label{sec:results}

\subsection{Main unlearning benchmark}
\label{sec:benchmark-results}

\begin{figure*}[t]
    \centering
    \includegraphics[width=\textwidth]
    {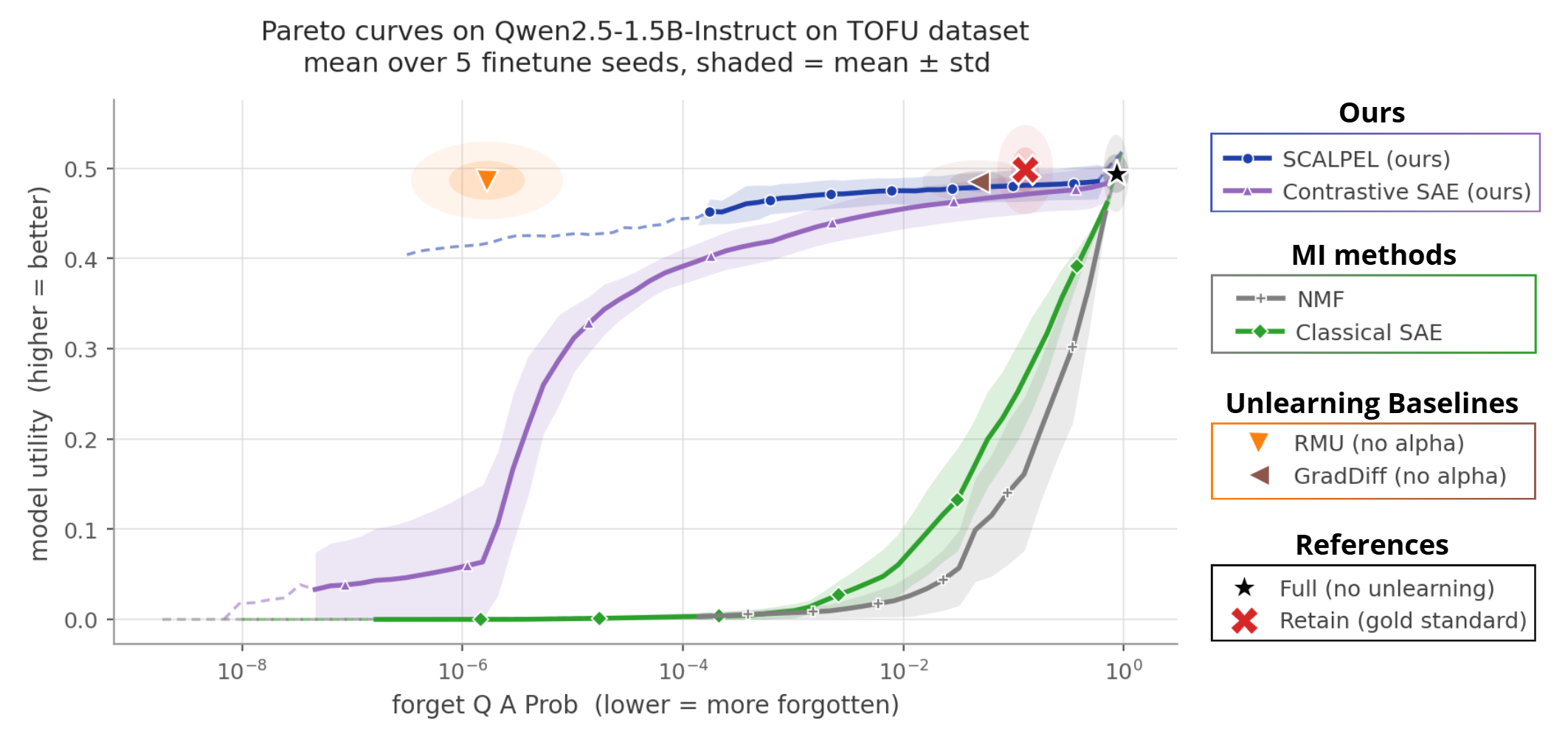}
    \caption{
    Unlearning trade-off on TOFU forget10 for
    \texttt{Qwen2.5-1.5B-Instruct}, after method-specific
    hyperparameter optimization and aggregation over five independent
    end-to-end seeds. Each curve varies the intervention strength:
    lower forget probability indicates stronger forgetting, while higher
    model utility indicates better preservation. Results for Llama and
    Gemma are reported in
    Appendix~\ref{app:additional_benchmarks} are similar. For more metrics look at Appendix~\ref{app:additional_benchmarks_tradeoff_curves}.
    }
    \label{fig:benchmark}
\end{figure*}

Figure~\ref{fig:benchmark} shows the main benchmark on Qwen; Llama and Gemma behave consistently and are reported in Appendix~\ref{app:additional_benchmarks}. Three patterns stand out. First, NMF and the standard SAE are poorly selective: reductions in forget probability come with substantial loss in model utility, as predicted by the objective mismatch of Section~\ref{sec:energy-bias} their features fire on forget examples but also capture structure shared with retain and Wiki data.

Second, contrastive feature learning substantially improves the
trade-off. The plain contrastive SAEs already outperform the
reconstruction-only SAE, while the main \textsc{SCALPEL} variants lie on
or close to the Pareto front over the practically relevant range. This
supports our central claim that adapting the extraction objective to
target selectivity produces more effective representation-level
interventions.

Third, RMU and Gradient Difference remain strong baselines, but
\textsc{SCALPEL} achieves competitive trade-offs while keeping the base
model fixed and expressing the intervention through explicit sparse
feature activations and decoder directions. Corresponding experiments on
Llama and Gemma are provided in
Appendix~\ref{app:additional_benchmarks}.

\subsection{Contrastive learning increases feature selectivity}
\label{sec:geometry-results}
Figure~\ref{fig:contrastive-geometry} tests the geometric prediction of
Proposition~\ref{prop:contrastive-selectivity}. The reconstruction-only SAE
lies close to $C_{\mathrm{pos}}=C_{\mathrm{bg}}$, with a separation gap
$\Delta C=C_{\mathrm{pos}}-C_{\mathrm{bg}}\approx-0.01$, indicating little
target--background separation. Adding the contrastive objective increases
within-target similarity and raises the gap to $\Delta C\approx0.27$.
\textsc{SCALPEL}-Split further reduces target--background similarity,
reaching $\Delta C\approx0.43$. The representation therefore moves
progressively toward the high-$C_{\mathrm{pos}}$, low-$C_{\mathrm{bg}}$
regime favored by the sparse contrastive objective.

This geometric separation is also reflected at the feature level.
Appendix~\ref{app:contrastive-selectivity} compares the top-5 selectivity
scores of the same models across authors and seeds. Mean top-5 selectivity
increases from approximately $1.9$ for the reconstruction-only SAE to
$4.0$ for the contrastive SAE, and to $24.5$ for
\textsc{SCALPEL}-Split. Together, these results support the proposed role
of contrastive learning: reconstruction and sparsity produce a compact
representation, while the contrastive objective organizes its features
around the target concepts and away from background structure.

\subsection{The selectivity score predicts unlearning quality}
\label{sec:score-results}

\begin{figure*}[t]
    \centering

    \begin{subfigure}[t]{0.49\textwidth}
        \centering
        \includegraphics[width=\linewidth]
        {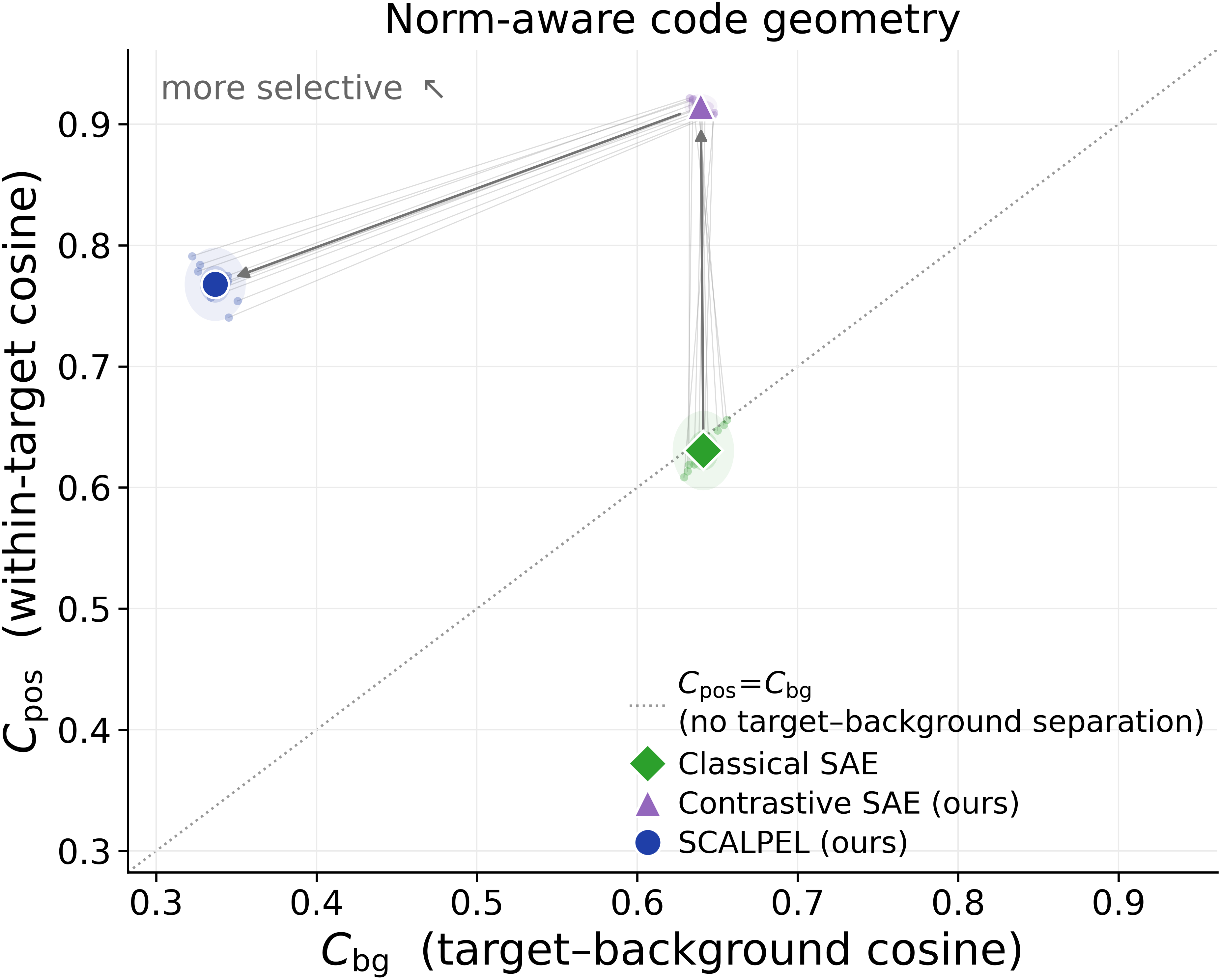}
        \caption{
        Norm-aware sparse-code geometry across the five Qwen benchmark seeds.
        For each method, $C_{\mathrm{pos}}$ measures within-target cosine
        similarity and $C_{\mathrm{bg}}$ target--background similarity on held-out
        activations. More selective representations lie toward the top-left.
        Adding contrastive learning increases within-target alignment, while
        \textsc{SCALPEL}-Split further reduces background overlap.
        }
        \label{fig:contrastive-geometry}
    \end{subfigure}
    \hfill
    \begin{subfigure}[t]{0.49\textwidth}
        \centering
        \includegraphics[width=\linewidth]
        {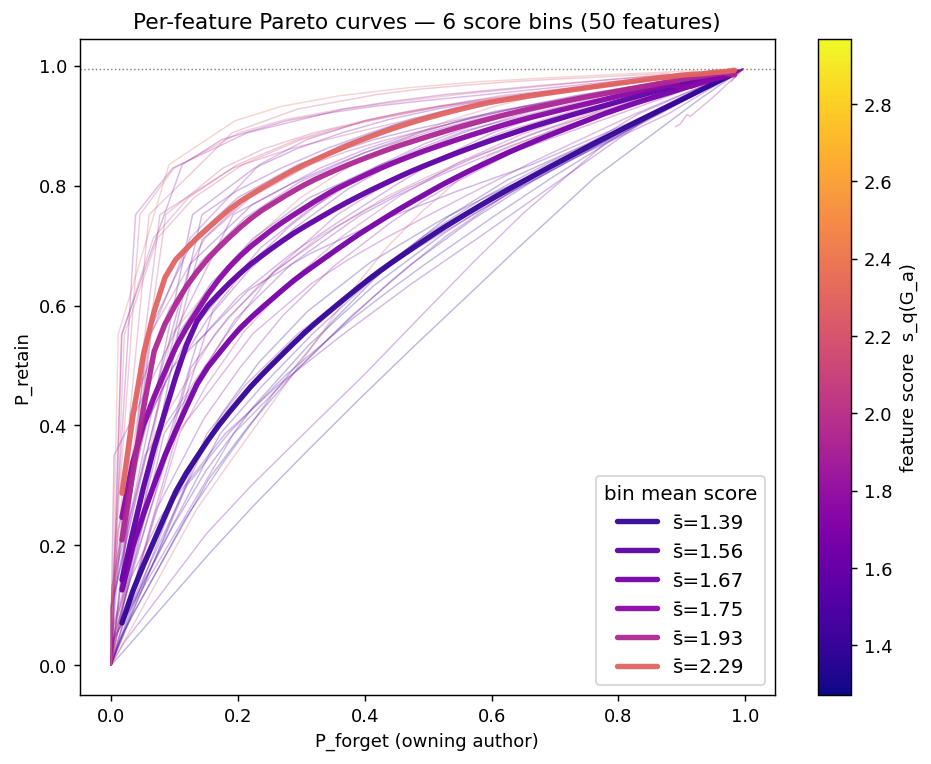}
        \caption{
        Single-feature unlearning curves grouped by selectivity-score
        quantile. Thin curves represent individual author--feature pairs;
        thick curves show the average within each bin. Higher-score features
        generally provide better forget--retain trade-offs.
        }
        \label{fig:score-pareto}
    \end{subfigure}

   \caption{Diagnostics linking representation geometry, feature selectivity, and
    intervention quality for \textsc{SCALPEL}.
    Left: contrastive training increases within-target alignment and reduces
    target--background overlap in the norm-aware sparse code.
    Right: features with higher selectivity scores generally achieve better
    single-feature forget--retain trade-offs.
    }
    \label{fig:scalpel-diagnostics}
\end{figure*}
Figure~\ref{fig:score-pareto} tests whether Eq.~\eqref{eq:score} predicts the
quality of individual interventions. Each author--feature pair is evaluated in
isolation while varying its intervention strength. In aggregate, higher-score
bins preserve more retain probability at fixed forget probability, or achieve
stronger forgetting at comparable retain probability. The selectivity score is
therefore associated not only with feature selection but also with downstream
intervention quality. Together with the contrastive-geometry diagnostic, this
connects the learned representation to feature selectivity and, in turn, to
intervention quality. The ordering remains imperfect at the level of individual
features: the score captures activation magnitude and target--background
selectivity, but ignores the orientation of $d_q$ in the local activation
geometry. A geometry-aware score would require substantially more expensive
local manifold estimation (Appendix~\ref{app:limitations_full}).

\section{Conclusion and Limitations}
\label{sec:conclusion}

We proposed a first approach for connecting mechanistic interpretability
with fine-grained unlearning of specific information. Our study shows that
reconstruction-based concept extractors can lack the selectivity required
to isolate a narrow forget target from the background knowledge that should
be preserved. We identify one mechanism behind this behavior:
reconstruction rewards captured energy, while target-specific information
may correspond to a low-energy component entangled with much larger
background structure; at the studied layer, we measure a
${\sim}329\times$ energy gap. \textsc{SCALPEL} addresses this problem by
adapting the feature-learning objective rather than the intervention
pipeline, contrasting multiple views of each target against background
data to learn sparse, target-selective features. Across Qwen, Llama, and
Gemma on TOFU, \textsc{SCALPEL} substantially improves over NMF and
reconstruction-only SAE interventions and is competitive with RMU and
Gradient Difference. Our theoretical analysis and diagnostics further
show how contrastive learning promotes target-selective representations
and how feature selectivity relates to intervention quality.

\paragraph{Limitations.}
Our evaluation covers TOFU and three relatively small models; larger models, less
controlled datasets, and document-, fact-, or sample-level targets remain untested.
The theoretical analysis is local, and the perturbation bound controls
representation-space magnitude rather than model behavior. Most importantly,
\textsc{SCALPEL} does not erase the parametric trace: it applies a persistent
internal filter to selected feature contributions whenever they activate, so our
results establish selective behavioral and representation-level unlearning under
intervention, not certified deletion. Using the extracted features to guide permanent
parameter updates while preserving their selectivity is the natural next step. A
fuller discussion is in Appendix~\ref{app:limitations_full}.

\bibliography{iclr2026_conference}

@misc{dorna2025openunlearning,
  title        = {OpenUnlearning: Accelerating LLM Unlearning via Unified Benchmarking of Methods and Metrics},
  author       = {Dorna, V. and others},
  year         = {2025},
  eprint       = {2506.12618},
  archivePrefix = {arXiv},
  primaryClass = {cs.CL}
}

@misc{maini2024tofu,
  title        = {{TOFU}: A Task of Fictitious Unlearning for {LLM}s},
  author       = {Maini, Pratyush and Feng, Zhili and Schwarzschild, Avi and Lipton, Zachary C. and Kolter, J. Zico},
  year         = {2024},
  eprint       = {2401.06121},
  archivePrefix = {arXiv},
  primaryClass = {cs.LG}
}

@misc{shi2024muse,
  title        = {{MUSE}: Machine Unlearning Six-Way Evaluation for Language Models},
  author       = {Shi, Weijia and Lee, Jaechan and Huang, Yangsibo and Malladi, Sadhika and Zhao, Jieyu and Holtzman, Ari and Liu, Daogao and Zettlemoyer, Luke and Smith, Noah A. and Zhang, Chiyuan},
  year         = {2024},
  eprint       = {2407.06460},
  archivePrefix = {arXiv},
  primaryClass = {cs.CL}
}

@misc{yang2024qwen25,
  title        = {{Qwen2.5} Technical Report},
  author       = {Yang, An and Yang, Baosong and Zhang, Beichen and Hui, Binyuan and Zheng, Bo and Yu, Bowen and Li, Chengyuan and Liu, Dayiheng and Huang, Fei and Wei, Haoran and Lin, Huan and Yang, Jian and Tu, Jianhong and Zhang, Jianwei and Yang, Jianxin and Yang, Jiaxi and Zhou, Jingren and Lin, Junyang and Dang, Kai and Lu, Keming and Bao, Keqin and Yang, Kexin and Yu, Le and Li, Mei and Xue, Mingfeng and Zhang, Pei and Zhu, Qin and Men, Rui and Lin, Runji and Lin, Tianhao and Tang, Tianyi and Xia, Tingyu and Ren, Xingzhang and Ren, Xuancheng and Fan, Yang and Su, Yang and Zhang, Yichang and Wan, Yu and Liu, Yuqiong and Cui, Zeyu and Zhang, Zhenru and Qiu, Zihan},
  year         = {2024},
  eprint       = {2412.15115},
  archivePrefix = {arXiv},
  primaryClass = {cs.CL}
}

@misc{grattafiori2024llama3,
  title        = {The {Llama} 3 Herd of Models},
  author       = {Grattafiori, Aaron and others},
  year         = {2024},
  eprint       = {2407.21783},
  archivePrefix = {arXiv},
  primaryClass = {cs.AI}
}

@misc{gemmateam2024gemma2,
  title        = {{Gemma 2}: Improving Open Language Models at a Practical Size},
  author       = {{Gemma Team}},
  year         = {2024},
  eprint       = {2408.00118},
  archivePrefix = {arXiv},
  primaryClass = {cs.CL}
}

@misc{nllbteam2022nllb,
  title        = {No Language Left Behind: Scaling Human-Centered Machine Translation},
  author       = {{NLLB Team} and Costa-juss{\`a}, Marta R. and Cross, James and {\c{C}}elebi, Onur and Elbayad, Maha and Heafield, Kenneth and Heffernan, Kevin and Kalbassi, Elahe and Lam, Janice and Licht, Daniel and Maillard, Jean and Sun, Anna and Wang, Skyler and Wenzek, Guillaume and Youngblood, Al and Akula, Bapi and Barrault, Loic and Gonzalez, Gabriel Mejia and Hansanti, Prangthip and Hoffman, John and Jarrett, Semarley and Sadagopan, Kaushik Ram and Rowe, Dirk and Spruit, Shannon and Tran, Chau and Andrews, Pierre and Ayan, Necip Fazil and Bhosale, Shruti and Edunov, Sergey and Fan, Angela and Gao, Cynthia and Goswami, Vedanuj and Guzm{\'a}n, Francisco and Koehn, Philipp and Mourachko, Alexandre and Ropers, Christophe and Saleem, Safiyyah and Schwenk, Holger and Wang, Jeff},
  year         = {2022},
  eprint       = {2207.04672},
  archivePrefix = {arXiv},
  primaryClass = {cs.CL}
}

@misc{dar2022embedding,
  title        = {Analyzing Transformers in Embedding Space},
  author       = {Dar, Guy and Geva, Mor and Gupta, Ankit and Berant, Jonathan},
  year         = {2022},
  eprint       = {2209.02535},
  archivePrefix = {arXiv},
  primaryClass = {cs.CL}
}

@inproceedings{meng2022rome,
  title        = {Locating and Editing Factual Associations in {GPT}},
  author       = {Meng, Kevin and Bau, David and Andonian, Alex and Belinkov, Yonatan},
  booktitle    = {Advances in Neural Information Processing Systems},
  year         = {2022}
}

@inproceedings{jourdan2023cockatiel,
  title        = {{COCKATIEL}: {CO}ntinuous {C}oncept ran{K}ed {AT}tribution with {I}nterpretable {EL}ements for Explaining Neural Net Classifiers on {NLP} Tasks},
  author       = {Jourdan, Fanny and Picard, Agustin and Fel, Thomas and Risser, Laurent and Loubes, Jean Michel and Asher, Nicholas},
  booktitle    = {Findings of the Association for Computational Linguistics: ACL 2023},
  year         = {2023},
  pages        = {5120--5136}
}

@inproceedings{fel2023holistic,
  title        = {A Holistic Approach to Unifying Automatic Concept Extraction and Concept Importance Estimation},
  author       = {Fel, Thomas and Boutin, Victor and Moayeri, Mazda and Cad{\`e}ne, R{\'e}mi and Bethune, Louis and And{\'e}ol, L{\'e}o and Chalvidal, Mathieu and Serre, Thomas},
  booktitle    = {Advances in Neural Information Processing Systems},
  year         = {2023}
}

@misc{cunningham2023sparse,
  title        = {Sparse Autoencoders Find Highly Interpretable Features in Language Models},
  author       = {Cunningham, Hoagy and Ewart, Aidan and Riggs, Logan and Huben, Robert and Sharkey, Lee},
  year         = {2023},
  eprint       = {2309.08600},
  archivePrefix = {arXiv},
  primaryClass = {cs.LG}
}

@misc{bricken2023monosemanticity,
  title        = {Towards Monosemanticity: Decomposing Language Models with Dictionary Learning},
  author       = {Bricken, Trenton and Templeton, Adly and Batson, Joshua and Chen, Brian and Jermyn, Adam and Conerly, Tom and Turner, Nicholas L. and Anil, Cem and Denison, Carson and Askell, Amanda and Lasenby, Robert and Wu, Yifan and Kravec, Shauna and Schiefer, Nicholas and Maxwell, Tim and Joseph, Nicholas and Hatfield-Dodds, Zac and Tamkin, Alex and Nguyen, Karina and McLean, Brayden and Burke, Josiah E. and Hume, Tristan and Carter, Shan and Henighan, Tom and Olah, Christopher},
  year         = {2023},
  howpublished = {Transformer Circuits Thread},
  note         = {\url{https://transformer-circuits.pub/2023/monosemantic-features}}
}

@misc{gao2024scalingSAE,
  title        = {Scaling and Evaluating Sparse Autoencoders},
  author       = {Gao, Leo and {Dupr{\'e} la Tour}, Tom and Tillman, Henk and Goh, Gabriel and Troll, Rajan and Radford, Alec and Sutskever, Ilya and Leike, Jan and Wu, Jeffrey},
  year         = {2024},
  eprint       = {2406.04093},
  archivePrefix = {arXiv},
  primaryClass = {cs.LG}
}

@misc{bussmann2024batchtopk,
  title        = {{BatchTopK} Sparse Autoencoders},
  author       = {Bussmann, Bart and Leask, Patrick and Nanda, Neel},
  year         = {2024},
  eprint       = {2412.06410},
  archivePrefix = {arXiv},
  primaryClass = {cs.LG}
}

@misc{rajamanoharan2024jumprelu,
  title        = {Jumping Ahead: Improving Reconstruction Fidelity with {JumpReLU} Sparse Autoencoders},
  author       = {Rajamanoharan, Senthooran and Lieberum, Tom and Sonnerat, Nicolas and Conmy, Arthur and Varma, Vikrant and Kram{\'a}r, J{\'a}nos and Nanda, Neel},
  year         = {2024},
  eprint       = {2407.14435},
  archivePrefix = {arXiv},
  primaryClass = {cs.LG}
}

@misc{farrell2024saeunlearning,
  title        = {Applying Sparse Autoencoders to Unlearn Knowledge in Language Models},
  author       = {Farrell, Eoin and Lau, Yeu-Tong and Conmy, Arthur},
  year         = {2024},
  eprint       = {2410.19278},
  archivePrefix = {arXiv},
  primaryClass = {cs.LG}
}

@misc{patil2024neurosurgeon,
  title        = {Targeted Knowledge Removal in {LLM}s Using Sparse Autoencoders},
  author       = {Patil, Kaustubh},
  year         = {2024},
  howpublished = {OpenReview},
  note         = {\url{https://openreview.net/forum?id=aeQeXlG2Pw}}
}

@misc{poupart2024contrastiveSAE,
  title        = {Contrastive Sparse Autoencoders for Interpreting Planning of Chess-Playing Agents},
  author       = {Poupart, Yoann},
  year         = {2024},
  eprint       = {2406.04028},
  archivePrefix = {arXiv},
  primaryClass = {cs.LG}
}

@misc{eldan2023harrypotter,
  title        = {Who's Harry Potter? Approximate Unlearning in {LLM}s},
  author       = {Eldan, Ronen and Russinovich, Mark},
  year         = {2023},
  eprint       = {2310.02238},
  archivePrefix = {arXiv},
  primaryClass = {cs.CL}
}

@misc{yuan2024closerlook,
  title        = {A Closer Look at Machine Unlearning for Large Language Models},
  author       = {Yuan, Xiaojian and Pang, Tianyu and Du, Chao and Chen, Kejiang and Zhang, Weiming and Lin, Min},
  year         = {2024},
  eprint       = {2410.08109},
  archivePrefix = {arXiv},
  primaryClass = {cs.CL}
}

@misc{bu2024ngdiff,
  title        = {Unlearning as Multi-Task Optimization: A Normalized Gradient Difference Approach with an Adaptive Learning Rate},
  author       = {Bu, Zhiqi and Jin, Xiaomeng and Vinzamuri, Bhanukiran and Ramakrishna, Anil and Chang, Kai-Wei and Cevher, Volkan and Hong, Mingyi},
  year         = {2024},
  eprint       = {2410.22086},
  archivePrefix = {arXiv},
  primaryClass = {cs.LG}
}

@misc{huutien2024adaptiveRMU,
  title        = {On Effects of Steering Latent Representation for Large Language Model Unlearning},
  author       = {Huu-Tien, Dang and Pham, Trung-Tin and Thanh-Tung, Hoang and Inoue, Naoya},
  year         = {2024},
  eprint       = {2408.06223},
  archivePrefix = {arXiv},
  primaryClass = {cs.CL}
}

@misc{elhage2021framework,
  title        = {A Mathematical Framework for Transformer Circuits},
  author       = {Elhage, Nelson and Nanda, Neel and Olsson, Catherine and others},
  year         = {2021},
  howpublished = {Transformer Circuits Thread},
  note         = {\url{https://transformer-circuits.pub/2021/framework/index.html}}
}

@inproceedings{geva2022ffn,
  title        = {Transformer Feed-Forward Layers Build Predictions by Promoting Concepts in the Vocabulary Space},
  author       = {Geva, Mor and Caciularu, Avi and Wang, Kevin Ro and Goldberg, Yoav},
  booktitle    = {Proceedings of the 2022 Conference on Empirical Methods in Natural Language Processing},
  pages        = {30--45},
  year         = {2022},
  publisher    = {Association for Computational Linguistics},
  doi          = {10.18653/v1/2022.emnlp-main.3}
}

@inproceedings{geva2023dissecting,
  title        = {Dissecting Recall of Factual Associations in Auto-Regressive Language Models},
  author       = {Geva, Mor and Bastings, Jasmijn and Filippova, Katja and Globerson, Amir},
  booktitle    = {Proceedings of the 2023 Conference on Empirical Methods in Natural Language Processing},
  pages        = {12216--12235},
  year         = {2023},
  publisher    = {Association for Computational Linguistics},
  doi          = {10.18653/v1/2023.emnlp-main.751}
}

@misc{yu2024hallucination,
  title        = {Mechanisms of Non-Factual Hallucinations in Language Models},
  author       = {Yu, Lei and Cao, Meng and Cheung, Jackie Chi Kit and Dong, Yue},
  year         = {2024},
  eprint       = {2403.18167},
  archivePrefix = {arXiv},
  primaryClass = {cs.CL}
}

@inproceedings{hong2025parametrictraces,
  title        = {Intrinsic Test of Unlearning Using Parametric Knowledge Traces},
  author       = {Hong, Yihuai and Yu, Lei and Yang, Haiqin and Ravfogel, Shauli and Geva, Mor},
  booktitle    = {Proceedings of the 2025 Conference on Empirical Methods in Natural Language Processing},
  pages        = {19513--19535},
  year         = {2025},
  publisher    = {Association for Computational Linguistics},
  doi          = {10.18653/v1/2025.emnlp-main.985}
}

@inproceedings{wang2025saesubspace,
  title        = {Model Unlearning via Sparse Autoencoder Subspace Guided Projections},
  author       = {Wang, Xu and Li, Zihao and Wang, Benyou and Hu, Yan and Zou, Difan},
  booktitle    = {Proceedings of the 2025 Conference on Empirical Methods in Natural Language Processing},
  pages        = {26530--26546},
  year         = {2025},
  publisher    = {Association for Computational Linguistics},
  doi          = {10.18653/v1/2025.emnlp-main.1348}
}

@misc{dahal2026gone,
  title        = {{GONE}: Structural Knowledge Unlearning via Neighborhood-Expanded Distribution Shaping},
  author       = {Dahal, Chahana and Balasubramaniam, Ashutosh and Xiong, Zuobin},
  year         = {2026},
  eprint       = {2603.12275},
  archivePrefix = {arXiv},
  primaryClass = {cs.CL}
}

@misc{mabrok2026latentmanifolds,
  title        = {Latent Semantic Manifolds in Large Language Models},
  author       = {Mabrok, Mohamed A.},
  year         = {2026},
  eprint       = {2603.22301},
  archivePrefix = {arXiv},
  primaryClass = {cs.AI}
}

@misc{pawelczyk2023incontextunlearning,
  title        = {In-Context Unlearning: Language Models as Few Shot Unlearners},
  author       = {Pawelczyk, Martin and Neel, Seth and Lakkaraju, Himabindu},
  year         = {2023},
  eprint       = {2310.07579},
  archivePrefix = {arXiv},
  primaryClass = {cs.LG}
}

@inproceedings{sennrich2016backtranslation,
  title     = {Improving Neural Machine Translation Models with Monolingual Data},
  author    = {Sennrich, Rico and Haddow, Barry and Birch, Alexandra},
  booktitle = {Proceedings of the 54th Annual Meeting of the Association for Computational Linguistics},
  pages     = {86--96},
  year      = {2016},
  publisher = {Association for Computational Linguistics}
}

@misc{zou2023universal,
  title         = {Universal and Transferable Adversarial Attacks on Aligned Language Models},
  author        = {Zou, Andy and Wang, Zifan and Carlini, Nicholas and Nasr, Milad and Kolter, J. Zico and Fredrikson, Matt},
  year          = {2023},
  eprint        = {2307.15043},
  archivePrefix = {arXiv},
  primaryClass  = {cs.CL}
}

@article{lee1999nmf,
  title   = {Learning the Parts of Objects by Non-Negative Matrix Factorization},
  author  = {Lee, Daniel D. and Seung, H. Sebastian},
  journal = {Nature},
  volume  = {401},
  pages   = {788--791},
  year    = {1999}
}

@book{jolliffe2002pca,
  title     = {Principal Component Analysis},
  author    = {Jolliffe, Ian T.},
  publisher = {Springer},
  year      = {2002}
}

@misc{bhalla2026sparsemanifolds,
  title        = {Do Sparse Autoencoders Capture Concept Manifolds?},
  author       = {Bhalla, Usha and Fel, Thomas and Rager, Can and Feucht, Sheridan and Haklay, Tal and Wurgaft, Daniel and Boppana, Siddharth and Kowal, Matthew and Shyam, Vasudev and Merullo, Jack and Geiger, Atticus and Lubana, Ekdeep Singh},
  year         = {2026},
  eprint       = {2604.28119},
  archivePrefix = {arXiv},
  url          = {https://arxiv.org/abs/2604.28119}
}

@misc{li2024wmdp,
  title         = {The {WMDP} Benchmark: Measuring and Reducing Malicious Use with Unlearning},
  author        = {Li, Nathaniel and Pan, Alexander and Gopal, Anjali and others},
  year          = {2024},
  eprint        = {2403.03218},
  archivePrefix = {arXiv},
  primaryClass  = {cs.CL}
}

@misc{zhang2024npo,
  title         = {Negative Preference Optimization: From Catastrophic Collapse to Effective Unlearning},
  author        = {Zhang, Ruiqi and Lin, Licong and Bai, Yu and Mei, Song},
  year          = {2024},
  eprint        = {2404.05868},
  archivePrefix = {arXiv},
  primaryClass  = {cs.LG}
}

@inproceedings{zhou2025neurosurgeon,
  title     = {{LLM Neurosurgeon}: Targeted Knowledge Removal in {LLM}s Using Sparse Autoencoders},
  author    = {Zhou, Dylan and Patil, Kunal and Sun, Yifan and Lakshmanan, Karthik and Rajamanoharan, Senthooran and Conmy, Arthur},
  booktitle = {ICLR 2025 Workshop on Building Trust in Language Models and Applications},
  year      = {2025},
  note      = {\url{https://openreview.net/forum?id=aeQeXlG2Pw}}
}

@article{oord2018infonce,
  title   = {Representation Learning with Contrastive Predictive Coding},
  author  = {van den Oord, A{\"a}ron and Li, Yazhe and Vinyals, Oriol},
  journal = {arXiv preprint arXiv:1807.03748},
  year    = {2018}
}

@inproceedings{khosla2020supcon,
  title     = {Supervised Contrastive Learning},
  author    = {Khosla, Prannay and Teterwak, Piotr and Wang, Chen and
               Sarna, Aaron and Tian, Yonglong and Isola, Phillip and
               Maschinot, Aaron and Liu, Ce and Krishnan, Dilip},
  booktitle = {Advances in Neural Information Processing Systems},
  volume    = {33},
  year      = {2020}
}

@inproceedings{chen2020simclr,
  title     = {A Simple Framework for Contrastive Learning of Visual
               Representations},
  author    = {Chen, Ting and Kornblith, Simon and Norouzi, Mohammad and
               Hinton, Geoffrey},
  booktitle = {Proceedings of the 37th International Conference on
               Machine Learning},
  year      = {2020}
}

@article{mcinnes2018umap,
  title   = {UMAP: Uniform Manifold Approximation and Projection for
             Dimension Reduction},
  author  = {McInnes, Leland and Healy, John and Melville, James},
  journal = {arXiv preprint arXiv:1802.03426},
  year    = {2018}
}

@inproceedings{akiba2019optuna,
  title     = {Optuna: A Next-generation Hyperparameter Optimization
               Framework},
  author    = {Akiba, Takuya and Sano, Shotaro and Yanase, Toshihiko and
               Ohta, Takeru and Koyama, Masanori},
  booktitle = {Proceedings of the 25th ACM SIGKDD International
               Conference on Knowledge Discovery and Data Mining},
  year      = {2019}
}

@inproceedings{lin2004rouge,
  title     = {{ROUGE}: A Package for Automatic Evaluation of Summaries},
  author    = {Lin, Chin-Yew},
  booktitle = {Text Summarization Branches Out},
  pages     = {74--81},
  year      = {2004},
  publisher = {Association for Computational Linguistics}
}

@inproceedings{carlini2021extracting,
  title     = {Extracting Training Data from Large Language Models},
  author    = {Carlini, Nicholas and Tram{\`e}r, Florian
               and Wallace, Eric and Jagielski, Matthew
               and Herbert-Voss, Ariel and Lee, Katherine
               and Roberts, Adam and Brown, Tom and Song, Dawn
               and Erlingsson, {\'U}lfar and Oprea, Alina
               and Raffel, Colin},
  booktitle = {30th USENIX Security Symposium},
  pages     = {2633--2650},
  year      = {2021},
  publisher = {USENIX Association}
}

@misc{eu2016gdpr,
  author       = {{European Parliament and Council of the European Union}},
  title        = {Regulation (EU) 2016/679 of the European Parliament and of the Council (General Data Protection Regulation)},
  year         = {2016},
  note         = {Article 17: Right to erasure (`right to be forgotten')},
  howpublished = {Official Journal of the European Union, L 119},
  url          = {https://eur-lex.europa.eu/eli/reg/2016/679/oj}
}
\bibliographystyle{iclr2026_conference}

\appendix

\section{Energy Bias of Reconstruction-Based Extraction}
\label{App:intro_th}

This appendix gives the formal statement behind
Section~\ref{sec:energy-bias}: reconstruction-only objectives reward an
extractor in proportion to the \emph{energy} of what it captures, so a
low-energy forget concept is systematically deprioritized relative to
high-energy background structure. The result is a motivation, not a
theorem about trained SAEs: it isolates the incentive created by the
reconstruction loss under an idealized concept decomposition, in the
spirit of the additive concept-manifold view of
\citet{bhalla2026sparsemanifolds}. We then describe in full the PCA
experiment that measures the corresponding energy gap on real
activations (Figure~\ref{fig:energy-bias}).

\subsection{Setting}

Let $h\in\mathbb{R}^d$ be a random residual-stream activation, and
suppose it decomposes into concept-manifold components
\begin{equation}
  h \;=\; \sum_{j=1}^{J} m_j,
  \qquad m_j \in \mathcal{M}_j,
  \label{eq:app-energy-decomp}
\end{equation}
where $\mathcal{M}_j$ denotes the $j$-th concept manifold. Define the
energy of concept $j$ as
\begin{equation}
  E_j \;=\; \mathbb{E}\bigl[\lVert m_j\rVert_2^2\bigr] \;>\; 0.
  \label{eq:app-energy-def}
\end{equation}

Let $\Theta$ denote an extractor (for instance an SAE), producing a
reconstruction $\hat h^{\Theta}$ of $h$, with reconstruction loss
$\mathcal{L}_{\mathrm{rec}}(\Theta)
 = \mathbb{E}[\lVert h - \hat h^{\Theta}\rVert_2^2]$.

\begin{assumption}[Component orthogonality]
\label{ass:app-components}
The concept components are centered and mutually uncorrelated:
\[
  \mathbb{E}[m_j] = 0
  \quad\text{for all } j,
  \qquad
  \mathbb{E}\bigl[\langle m_j, m_k\rangle\bigr] = 0
  \quad\text{for } j\neq k.
\]
\end{assumption}

Under Assumption~\ref{ass:app-components}, the total activation energy
splits exactly across concepts,
$\mathbb{E}[\lVert h\rVert_2^2] = \sum_{j=1}^{J} E_j$, which is what
makes $E_j$ interpretable as concept $j$'s share of the signal.

\begin{assumption}[Concept-aligned reconstruction with orthogonal
errors]
\label{ass:app-errors}
The reconstruction admits a concept-aligned decomposition
$\hat h^{\Theta} = \sum_{j=1}^{J} \hat m_j^{\Theta}$ (any decoder bias
is absorbed into the estimates), and the per-concept reconstruction
errors are mutually uncorrelated:
\[
  \mathbb{E}\bigl[\langle m_j - \hat m_j^{\Theta},\;
                   m_k - \hat m_k^{\Theta}\rangle\bigr] = 0
  \quad\text{for } j \neq k.
\]
\end{assumption}

Assumption~\ref{ass:app-errors} holds, for example, when the concept
manifolds occupy mutually orthogonal subspaces and each estimate
$\hat m_j^{\Theta}$ and its error lie in the corresponding subspace, or
when the errors are independent across concepts. It is an idealization:
real concept components overlap, and a trained SAE does not come with a
canonical per-concept decomposition of its reconstruction. The
assumption serves to isolate the incentive structure of the loss; the
experiment of Section~\ref{sec:app-energy-experiment} provides the
empirical counterpart on real activations.

For each concept, define the \emph{captured energy fraction}
\begin{equation}
  C_j(\Theta)
  \;=\;
  1 - \frac{\mathbb{E}\bigl[\lVert m_j - \hat m_j^{\Theta}
             \rVert_2^2\bigr]}{E_j}
  \;\le\; 1,
  \label{eq:app-capture-def}
\end{equation}
with $C_j(\Theta)\ge 0$ whenever the estimate is at least as accurate
as the trivial estimate $\hat m_j^{\Theta}=0$.

\subsection{Energy-weighted decomposition of the reconstruction loss}

\begin{proposition}[Energy bias of reconstruction objectives]
\label{prop:app-energy-bias}
Under Assumptions
\ref{ass:app-components}--\ref{ass:app-errors},
\begin{equation}
  \mathcal{L}_{\mathrm{rec}}(\Theta)
  \;=\;
  \sum_{j=1}^{J} E_j\bigl(1 - C_j(\Theta)\bigr).
  \label{eq:app-loss-decomposition}
\end{equation}
Equivalently, minimizing the reconstruction loss is equivalent to
maximizing the energy-weighted capture
\begin{equation}
  G_{\mathrm{rec}}(\Theta)
  \;=\;
  \sum_{j=1}^{J} E_j\, C_j(\Theta).
  \label{eq:app-gain-def}
\end{equation}
\end{proposition}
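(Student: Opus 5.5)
The plan is to write the total reconstruction error as a sum of per-concept errors, expand the squared norm, and use Assumption~\ref{ass:app-errors} to kill the cross terms. By the concept-aligned decomposition in Assumption~\ref{ass:app-errors} and equation~\ref{eq:app-energy-decomp}, with any decoder bias absorbed into the estimates,
\[
h-\hat h^{\Theta}=\sum_{j=1}^{J}\bigl(m_j-\hat m_j^{\Theta}\bigr)=:\sum_{j=1}^{J}e_j .
\]
Expanding the squared norm and taking expectations gives
\[
\mathcal{L}_{\mathrm{rec}}(\Theta)=\sum_{j}\mathbb{E}\bigl[\lVert e_j\rVert_2^2\bigr]+\sum_{j\neq k}\mathbb{E}\bigl[\langle e_j,e_k\rangle\bigr].
\]
The second sum vanishes by the uncorrelated-errors part of Assumption~\ref{ass:app-errors}.

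Next, I rewrite each diagonal term using the definition of the captured fraction in equation~\ref{eq:app-capture-def}. Since $E_j>0$ by equation~\ref{eq:app-energy-def}, the definition rearranges to
\[
\mathbb{E}\bigl[\lVert e_j\rVert_2^2\bigr]=E_j\bigl(1-C_j(\Theta)\bigr).
\]
Summing over $j$ yields equation~\ref{eq:app-loss-decomposition}. For the equivalence, I write
\[
\mathcal{L}_{\mathrm{rec}}(\Theta)=\sum_j E_j-G_{\mathrm{rec}}(\Theta).
\]
The energies $E_j$ are properties of the data distribution and do not depend on $\Theta$. So $\sum_j E_j$ is a constant, and minimizing $\mathcal{L}_{\mathrm{rec}}$ over $\Theta$ is the same as maximizing $G_{\mathrm{rec}}$, with identical minimizers.

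Assumption~\ref{ass:app-components} is not needed for the identity itself. It only serves to interpret $\sum_j E_j=\mathbb{E}[\lVert h\rVert_2^2]$, which follows from the same kind of cross-term cancellation, so $E_j$ really is concept $j$'s share of the signal.

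I do not expect a genuine obstacle. The step that needs care is justifying the decomposition of $h-\hat h^{\Theta}$: the bias must be absorbed consistently, so that $\sum_j \hat m_j^{\Theta}$ equals $\hat h^{\Theta}$ exactly. Otherwise a leftover bias term would reintroduce cross terms. I would also note that the $\Theta$-independence of the $E_j$ is what makes the "equivalently" clause valid.
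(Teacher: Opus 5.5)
Your proposal is correct and follows the paper's proof essentially step for step: you split $h-\hat h^{\Theta}$ into per-concept errors, kill the cross terms with Assumption~\ref{ass:app-errors}, identify each diagonal term as $E_j(1-C_j(\Theta))$, and use the $\Theta$-independence of $\sum_j E_j$ to get the equivalence. Your remark that Assumption~\ref{ass:app-components} is not needed for the identity itself is also accurate; the paper's proof does not invoke it either.
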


\begin{proof}
Write $h - \hat h^{\Theta} = \sum_{j=1}^{J}(m_j - \hat m_j^{\Theta})$
using Eq.~\eqref{eq:app-energy-decomp} and
Assumption~\ref{ass:app-errors}. Expanding the squared norm,
\[
  \mathcal{L}_{\mathrm{rec}}(\Theta)
  =
  \sum_{j=1}^{J}
  \mathbb{E}\bigl[\lVert m_j - \hat m_j^{\Theta}\rVert_2^2\bigr]
  +
  \sum_{j\neq k}
  \mathbb{E}\bigl[\langle m_j - \hat m_j^{\Theta},\,
                   m_k - \hat m_k^{\Theta}\rangle\bigr].
\]
The cross terms vanish by Assumption~\ref{ass:app-errors}, and each
diagonal term equals $E_j(1-C_j(\Theta))$ by
Eq.~\eqref{eq:app-capture-def}, which gives
Eq.~\eqref{eq:app-loss-decomposition}. Since
$\mathcal{L}_{\mathrm{rec}}(\Theta)
 = \sum_j E_j - G_{\mathrm{rec}}(\Theta)$ and $\sum_j E_j$ does not
depend on $\Theta$, minimizing the former maximizes the latter.
\end{proof}

\begin{corollary}[Marginal capacity is allocated by energy]
\label{cor:app-marginal}
Consider a marginal change of the extractor that improves the capture
fractions by $(\Delta C_1,\ldots,\Delta C_J)$. The induced
reconstruction gain is
\[
  \Delta G_{\mathrm{rec}}
  \;=\;
  \sum_{j=1}^{J} E_j\,\Delta C_j .
\]
In particular, when a unit of capacity can improve either a background
concept $\mathcal{M}_b$ by $\Delta C_b$ or a forget concept
$\mathcal{M}_f$ by $\Delta C_f$, the reconstruction objective prefers
the background concept whenever
\[
  E_b\,\Delta C_b \;>\; E_f\,\Delta C_f,
\]
and, for comparable capture improvements
$\Delta C_b \approx \Delta C_f$, the preference is decided by the
energies alone.
\end{corollary}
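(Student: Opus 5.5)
The corollary follows directly from Proposition~\ref{prop:app-energy-bias}, which writes the gain as $G_{\mathrm{rec}}(\Theta)=\sum_j E_j C_j(\Theta)$. Two facts make the argument work. First, $G_{\mathrm{rec}}$ is linear in the capture fractions $C_j$. Second, the energies $E_j=\mathbb{E}[\lVert m_j\rVert_2^2]$ depend only on the data distribution, not on the extractor. So I will compare $G_{\mathrm{rec}}$ before and after the marginal change and read off the difference term by term.

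\textbf{Step 1: the gain formula.} Take an extractor $\Theta$ and a modified extractor $\Theta'$, both satisfying Assumptions~\ref{ass:app-components}--\ref{ass:app-errors}. Set $\Delta C_j=C_j(\Theta')-C_j(\Theta)$. Apply Proposition~\ref{prop:app-energy-bias} to each extractor and subtract:
\[
\Delta G_{\mathrm{rec}}=G_{\mathrm{rec}}(\Theta')-G_{\mathrm{rec}}(\Theta)=\sum_{j=1}^J E_j\,\Delta C_j .
\]
Since $\mathcal L_{\mathrm{rec}}=\sum_j E_j-G_{\mathrm{rec}}$, this is the same quantity as $-\Delta\mathcal L_{\mathrm{rec}}$. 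The formula holds exactly for finite changes, not only infinitesimal ones. If one wants it in differential form, one can assume each $C_j$ is differentiable along a path of parameters; the first-order expansion then has the same structure by linearity.

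\textbf{Step 2: the background-versus-forget comparison.} Formalize "a unit of capacity" as two alternative modifications $\Theta_b$ and $\Theta_f$ of the same $\Theta$. Under $\Theta_b$, only concept $b$ improves, by $\Delta C_b$, and the other $C_j$ are unchanged. Under $\Theta_f$, only concept $f$ improves, by $\Delta C_f$. Step~1 gives gains $E_b\Delta C_b$ and $E_f\Delta C_f$, so the loss is lower under $\Theta_b$ exactly when $E_b\Delta C_b>E_f\Delta C_f$.

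When $\Delta C_b=\Delta C_f=c>0$, this comparison reduces to $E_b>E_f$. The "approximately equal" case follows because the inequality is strict and continuous in $(\Delta C_b,\Delta C_f)$. Concretely, the preference persists whenever $\Delta C_b/\Delta C_f>E_f/E_b$. With the measured energy ratio of about $329$, this leaves a wide margin.

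\textbf{Main obstacle: modelling, not computation.} The work lies in stating precisely what "a unit of capacity" and "all else equal" mean. The only point needing care is that both modified extractors must still satisfy Assumption~\ref{ass:app-errors}. Otherwise cross terms reappear and Proposition~\ref{prop:app-energy-bias} no longer applies. I will therefore state the corollary within the class of extractors satisfying these assumptions, for example extractors acting concept-wise on orthogonal subspaces. In that class, changing the estimate $\hat m_b^\Theta$ leaves every other per-concept error unchanged, so the "only $C_b$ changes" hypothesis can actually be realized.
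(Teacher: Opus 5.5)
Your proposal is correct and follows the same route as the paper, which treats the corollary as immediate from Proposition~\ref{prop:app-energy-bias}: $G_{\mathrm{rec}}=\sum_j E_j C_j$ is linear in the capture fractions with extractor-independent weights $E_j$, so differencing two extractors gives $\Delta G_{\mathrm{rec}}=\sum_j E_j\,\Delta C_j$, and the background-versus-forget comparison can be read off directly. Your extra care in requiring both modified extractors to stay within the class satisfying Assumption~\ref{ass:app-errors} is a sensible refinement that the paper leaves implicit; note, however, that the measured $329\times$ ratio compares the top-three PCA energies of the background with those of the background-orthogonal forget residuals, not $E_b/E_f$ itself, so it illustrates the margin rather than instantiating it.
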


\begin{remark}
\label{rem:app-energy-scope}
Proposition~\ref{prop:app-energy-bias} does not say that the forget
concept is unimportant to the model, only that it contributes little to
the reconstruction objective compared with broad background structure.
PCA is the extreme case: for linear orthogonal projections, maximizing
captured energy is exactly the PCA criterion, so high-variance
background directions are selected by construction. NMF and
reconstruction-only SAEs are softer versions of the same incentive.
This is the mechanism addressed by the contrastive objective of
\textsc{SCALPEL}: Proposition~\ref{prop:contrastive-selectivity}
(proved in Appendix~\ref{app:sparse_contrastive}) shows that adding
contrastive learning to reconstruction and sparsity favors sparse
representations with stronger within-target alignment and lower
target--background overlap.
\end{remark}

\subsection{Measuring the energy gap on real activations}
\label{sec:app-energy-experiment}

We now describe in full the experiment summarized in
Section~\ref{sec:energy-bias} and Figure~\ref{fig:energy-bias}, which
measures the energy gap between background structure and
target-specific structure on residual-stream activations extracted at
layer $-4$ of Qwen.

\paragraph{Background PCA.}
Let $H_{\mathrm{bg}} = H_r \cup H_w$ be the union of retain and Wiki
activations and $H_f$ the forget activations. We estimate the
background mean and covariance using only the background data,
\[
  \overline h_{\mathrm{bg}}
  = \frac{1}{|H_{\mathrm{bg}}|}\sum_{h\in H_{\mathrm{bg}}} h,
  \qquad
  \Sigma_{\mathrm{bg}}
  = \frac{1}{|H_{\mathrm{bg}}|}\sum_{h\in H_{\mathrm{bg}}}
    \bigl(h-\overline h_{\mathrm{bg}}\bigr)
    \bigl(h-\overline h_{\mathrm{bg}}\bigr)^{\!\top}.
\]
Let $\lambda_1^{\mathrm{bg}} \ge \lambda_2^{\mathrm{bg}} \ge
\lambda_3^{\mathrm{bg}} \ge \cdots$ be the eigenvalues of
$\Sigma_{\mathrm{bg}}$ and
$U_{\mathrm{bg},3}\in\mathbb{R}^{d\times 3}$ its first three
eigenvectors. The left panel of Figure~\ref{fig:energy-bias} shows the
background activations in these coordinates,
$y_{\mathrm{bg}} = U_{\mathrm{bg},3}^{\top}
 (h - \overline h_{\mathrm{bg}})$ for $h\in H_{\mathrm{bg}}$.

\paragraph{Background-orthogonal forget residuals.}
We remove the dominant background component from every forget
activation,
\[
  \widetilde h_f
  = \bigl(I - U_{\mathrm{bg},3} U_{\mathrm{bg},3}^{\top}\bigr)
    \bigl(h_f - \overline h_{\mathrm{bg}}\bigr),
  \qquad h_f \in H_f .
\]
Projecting these residuals back onto $U_{\mathrm{bg},3}$ gives zero
coordinates by construction, so visualizing the remaining forget
structure requires a second basis: we fit a second PCA directly on
$\widetilde H_f = \{\widetilde h_f : h_f\in H_f\}$. With
$\overline{\widetilde h}_f$ the mean of the residuals, define
\[
  \Sigma_f^{\perp\mathrm{bg}}
  = \frac{1}{|\widetilde H_f|}\sum_{\widetilde h_f\in\widetilde H_f}
    \bigl(\widetilde h_f - \overline{\widetilde h}_f\bigr)
    \bigl(\widetilde h_f - \overline{\widetilde h}_f\bigr)^{\!\top},
\]
with eigenvalues $\lambda_i^{f\perp\mathrm{bg}}$. The right panel of
Figure~\ref{fig:energy-bias} shows the residual forget activations in
the top-3 eigenvectors of $\Sigma_f^{\perp\mathrm{bg}}$. The two
panels therefore use two different PCA bases and must be compared
through their scale and explained energy, not their absolute
coordinates.

\paragraph{Energy comparison.}
We measure the scale of each displayed manifold by the sum of its first
three PCA eigenvalues,
\[
  \widehat E_{\mathrm{bg}}^{(3)}
  = \sum_{i=1}^{3}\lambda_i^{\mathrm{bg}},
  \qquad
  \widehat E_f^{\perp\mathrm{bg},(3)}
  = \sum_{i=1}^{3}\lambda_i^{f\perp\mathrm{bg}},
\]
and obtain
\[
  \widehat E_{\mathrm{bg}}^{(3)} = 3{,}787{,}695,
  \qquad
  \widehat E_f^{\perp\mathrm{bg},(3)} = 11{,}517,
  \qquad
  \frac{\widehat E_{\mathrm{bg}}^{(3)}}
       {\widehat E_f^{\perp\mathrm{bg},(3)}}
  \approx 329.
\]

\paragraph{What the experiment does and does not show.}
The experiment does not claim that the complete forget manifold carries
exactly $329\times$ less energy: it measures the scale of the forget
structure that \emph{remains} after removing the three dominant
background directions, which is the component a targeted intervention
must isolate. The size of the gap nevertheless supports the energy-bias
argument of Proposition~\ref{prop:app-energy-bias}: the
target-specific component is orders of magnitude smaller than the
background structure that a reconstruction-based extractor is primarily
rewarded for capturing.

\section{Sparse Contrastive Learning Promotes Target-Selective Features}
\label{app:sparse_contrastive}

We give a geometric account of why the combination of reconstruction,
sparsity, and contrastive learning used by \textsc{SCALPEL} favors
target-selective sparse features. The result characterizes the geometry
preferred by the objective when such a representation is compatible with
reconstruction; it is not a global convergence result for SAE training.

\subsection{Setting}

For a standardized residual activation $h_i$, the SAE produces
non-negative activations $z(h_i)\in\mathbb{R}_+^Q$ and reconstructs
\[
\widehat h_i
=
b_D+\sum_{q=1}^Q z_q(h_i)d_q.
\]
The contrastive loss operates on the norm-aware code
\[
\widetilde z_q(h_i)
=
z_q(h_i)\lVert d_q\rVert_2,
\qquad
\widetilde z_i
=
\bigl(
\widetilde z_1(h_i),\ldots,\widetilde z_Q(h_i)
\bigr).
\]
For every non-zero code, define
\[
\rho_i=\lVert\widetilde z_i\rVert_2,
\qquad
u_i=\frac{\widetilde z_i}{\rho_i}.
\]
Thus
\[
u_i\in\mathbb{R}_+^Q,
\qquad
\lVert u_i\rVert_2=1,
\]
and the cosine similarity used by the contrastive objective is simply
\[
\operatorname{sim}(i,j)
=
\langle u_i,u_j\rangle.
\]

We assume that decoder scales are controlled:
\[
0<d_{\min}
\leq
\lVert d_q\rVert_2
\leq
d_{\max}<\infty.
\tag{A1}
\]
This removes the reciprocal rescaling ambiguity between encoder
activations and decoder directions.

For anchor $i$, let $P(i)$ and $N(i)$ be its positive and negative sets,
with $p_i=|P(i)|$ and $n_i=|N(i)|$. In \textsc{SCALPEL}, positives
belong to the same target concept and negatives are drawn from the
background. The multi-positive InfoNCE loss is
\[
\ell_i
=
-\log
\frac{
\sum_{j\in P(i)}
e^{\langle u_i,u_j\rangle/\tau}
}{
\sum_{j\in P(i)}
e^{\langle u_i,u_j\rangle/\tau}
+
\sum_{j\in N(i)}
e^{\langle u_i,u_j\rangle/\tau}
}.
\]

\subsection{Reconstruction prevents the trivial representation}

The contrastive direction $u_i$ is undefined when
$\widetilde z_i=0$. Reconstruction prevents the all-zero solution from
being useful.

\begin{lemma}[Non-triviality from reconstruction]
\label{lem:nontrivial}
If $z(h)=0$ almost surely, the best possible reconstruction is the
constant decoder bias
\[
b_D^\star=\mathbb{E}[h],
\]
with loss
\[
L_{\mathrm{const}}
=
\mathbb{E}
\left[
\lVert h-\mathbb{E}[h]\rVert_2^2
\right].
\]
Consequently, any SAE satisfying
\[
L_{\mathrm{rec}}<L_{\mathrm{const}}
\]
must use non-zero latent codes on a set of positive probability.
\end{lemma}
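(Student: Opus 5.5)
If $z(h)=0$ almost surely, the decoder output collapses to $\widehat h = b_D$ almost surely. The reconstruction loss is then
\[
\mathcal L_{\mathrm{rec}}(b_D)=\mathbb{E}\bigl[\lVert h-b_D\rVert_2^2\bigr],
\]
a function of the single vector $b_D\in\mathbb{R}^d$. So the whole argument reduces to the classical fact that the mean minimizes expected squared error, followed by a contrapositive.

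\textbf{Step 1: the bias–variance identity.} We assume $\mathbb{E}\lVert h\rVert_2^2<\infty$; this holds for the finite standardized cache, and in general is needed for $L_{\mathrm{const}}$ to be finite. Writing $h-b_D=(h-\mathbb{E}[h])+(\mathbb{E}[h]-b_D)$ and expanding gives
\[
\mathbb{E}\bigl[\lVert h-b_D\rVert_2^2\bigr]=\mathbb{E}\bigl[\lVert h-\mathbb{E}[h]\rVert_2^2\bigr]+\lVert \mathbb{E}[h]-b_D\rVert_2^2 .
\]
The cross term $2\langle \mathbb{E}[h-\mathbb{E}[h]],\,\mathbb{E}[h]-b_D\rangle$ vanishes because $\mathbb{E}[h-\mathbb{E}[h]]=0$. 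The right-hand side is uniquely minimized at $b_D^\star=\mathbb{E}[h]$, with minimum value $L_{\mathrm{const}}$. This proves the first claim.

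\textbf{Step 2: the contrapositive.} Suppose the latent codes vanish almost surely. Then, whatever the encoder and decoder directions are, the reconstruction equals the constant $b_D$ almost surely. By Step 1, its loss is then at least $L_{\mathrm{const}}$. Hence any SAE with $\mathcal L_{\mathrm{rec}}<L_{\mathrm{const}}$ cannot have $z(h)=0$ almost surely; that is, $\{z(h)\neq 0\}$ has positive probability.

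\textbf{Step 3: transfer to the norm-aware code.} We also record that $\widetilde z(h)\neq 0$ on that same set. This follows from Assumption (A1): since $\lVert d_q\rVert_2\ge d_{\min}>0$, we have $\widetilde z_q=0$ if and only if $z_q=0$. Consequently $u_i$ is well defined on a set of positive probability, which is the point of the lemma for the contrastive analysis.

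No step is a real obstacle; the only care needed is the finite second moment and the "almost surely" qualifier, since the codes may vanish on null sets without affecting the loss.
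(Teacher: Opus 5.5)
Your proof is correct and follows the same argument as the paper: when $z(h)=0$ the reconstruction collapses to the constant $b_D$, the mean minimizes expected squared error, and the contrapositive gives non-zero codes on a set of positive probability. Your explicit bias--variance expansion, with the finite-second-moment caveat, fills in a step the paper only asserts, and your Step 3 transfer to $\widetilde z$ via (A1) is a valid addition the paper leaves implicit. That transfer in fact holds without (A1), since a feature with $d_q=0$ contributes nothing to $\widehat h$.
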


\begin{proof}
If $z(h)=0$, then $\widehat h=b_D$ for every input. The minimizer of
$\mathbb{E}\lVert h-b_D\rVert_2^2$ is $b_D=\mathbb{E}[h]$, giving the
stated loss. Any reconstruction below this value therefore requires
non-zero latent activations.
\end{proof}

Reconstruction therefore supplies a non-trivial representation, while
the remaining terms determine how this representation is organized.

\subsection{Sparsity concentrates the norm-aware code}

For a normalized code $u$, let
$u_{(1)}\geq\cdots\geq u_{(Q)}$ denote its coordinates in decreasing
order.

\begin{lemma}[Sparse energy concentration]
\label{lem:sparse_energy}
For every $K\geq1$,
\[
\sum_{q>K}u_{(q)}^2
\leq
\frac{\lVert u\rVert_1^2}{K},
\]
and hence
\[
\sum_{q=1}^{K}u_{(q)}^2
\geq
1-\frac{\lVert u\rVert_1^2}{K}.
\]
Moreover,
\[
\lVert u\rVert_1\geq\lVert u\rVert_2=1,
\]
with equality if and only if $u$ is one-sparse.
\end{lemma}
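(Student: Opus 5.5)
The plan is to work with the sorted coordinates $u_{(1)}\geq\cdots\geq u_{(Q)}\geq 0$ and exploit the monotonicity of the ordering together with non-negativity. First I will establish a pointwise bound on each tail coordinate. For any index $q>K$, the coordinate $u_{(q)}$ is no larger than each of the $K$ coordinates $u_{(1)},\dots,u_{(K)}$, so
\[
K\,u_{(q)}\leq\sum_{p=1}^{K}u_{(p)}\leq\lVert u\rVert_1 ,
\]
which gives $u_{(q)}\leq\lVert u\rVert_1/K$. This step uses only the ordering; non-negativity of $u$, guaranteed since $u_i\in\mathbb{R}_+^Q$, lets us identify $\sum_p u_{(p)}$ with $\lVert u\rVert_1$ directly, although the argument also goes through with absolute values.

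Second, I will bound the tail energy by splitting each squared coordinate as $u_{(q)}^2=u_{(q)}\cdot u_{(q)}$, applying the pointwise bound to one factor only:
\[
\sum_{q>K}u_{(q)}^2\leq\frac{\lVert u\rVert_1}{K}\sum_{q>K}u_{(q)}\leq\frac{\lVert u\rVert_1^2}{K}.
\]
This is the standard best-$K$-term approximation estimate from compressed sensing. The complementary statement follows at once from $\lVert u\rVert_2=1$: the head energy satisfies $\sum_{q\leq K}u_{(q)}^2=1-\sum_{q>K}u_{(q)}^2\geq 1-\lVert u\rVert_1^2/K$.

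Finally, for the comparison $\lVert u\rVert_1\geq\lVert u\rVert_2$, I will expand $\lVert u\rVert_1^2=\sum_q u_q^2+\sum_{p\neq q}u_pu_q=1+\sum_{p\neq q}u_pu_q$. All cross terms are non-negative because $u\geq0$, which gives the inequality. Equality holds if and only if every cross product $u_pu_q$ with $p\neq q$ vanishes, that is, if and only if at most one coordinate is non-zero. Since $\lVert u\rVert_2=1$, exactly one coordinate is non-zero, so $u$ is one-sparse. Conversely, a one-sparse unit vector has $\lVert u\rVert_1=1$.

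No step is a real obstacle. The only subtle points are keeping one factor of $u_{(q)}$ unbounded in the second step, which gives $\lVert u\rVert_1^2/K$ rather than a weaker estimate, and stating the equality characterization carefully so that the unit-norm constraint rules out the zero vector.
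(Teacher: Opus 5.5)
Your proposal is correct and follows essentially the same route as the paper: you bound every tail coordinate by $\lVert u\rVert_1/K$ using the ordering, then keep one factor of $u_{(q)}$ in each squared term to get $\lVert u\rVert_1^2/K$, and use $\lVert u\rVert_2=1$ for the head bound. Your expansion $\lVert u\rVert_1^2 = 1+\sum_{p\neq q}u_pu_q$ spells out the equality characterization for $\lVert u\rVert_1\geq\lVert u\rVert_2$, which the paper only asserts.
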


\begin{proof}
Since the coordinates are sorted,
\[
u_{(K+1)}
\leq
\frac{\lVert u\rVert_1}{K}.
\]
Therefore
\[
\sum_{q>K}u_{(q)}^2
\leq
u_{(K+1)}
\sum_{q>K}u_{(q)}
\leq
\frac{\lVert u\rVert_1^2}{K}.
\]
The second result follows from $\lVert u\rVert_2^2=1$.
Finally, $\lVert u\rVert_1\geq\lVert u\rVert_2$, with equality for a
non-negative vector only when at most one coordinate is non-zero.
\end{proof}

Under Assumption~(A1), sparsifying $z$ also sparsifies
$\widetilde z$, since multiplication by bounded positive decoder norms
does not change its support. Thus the $\ell_1$ term favors concentrating
the representation on a small number of coordinates, while
reconstruction prevents complete collapse.

\subsection{Contrastive learning aligns targets and separates background}

We now characterize the geometry preferred by the contrastive term.

\begin{proposition}[Optimal contrastive geometry]
\label{prop:contrastive_geometry}
For every valid anchor $i$,
\[
\ell_i
\geq
\log\left(
1+\frac{n_i}{p_i}e^{-1/\tau}
\right).
\]
Equality holds if and only if
\[
\langle u_i,u_j\rangle=1
\quad\forall j\in P(i),
\]
and
\[
\langle u_i,u_j\rangle=0
\quad\forall j\in N(i).
\]
Therefore, whenever this geometry is compatible with reconstruction,
the contrastive objective favors identical normalized codes within a
target and disjoint supports between that target and its background
negatives.
\end{proposition}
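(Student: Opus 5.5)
The plan is to bound the positive and negative masses separately using only two facts about the normalized codes. First, since $u_i,u_j\in\mathbb{R}_+^Q$ are unit vectors with non-negative coordinates, their inner product satisfies $0\le\langle u_i,u_j\rangle\le 1$. Nonnegativity is where the non-negative codes enter: it comes from $z\ge0$ together with Assumption~(A1), which makes $\widetilde z\ge0$. The upper bound is Cauchy--Schwarz. Second, the InfoNCE loss can be rewritten as
\[
\ell_i=\log\left(1+\frac{A_i^-}{A_i^+}\right),
\]
which is strictly increasing in the ratio $A_i^-/A_i^+$. So it suffices to lower-bound that ratio and to identify exactly when the lower bound is attained.

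For the negative mass, each term satisfies $e^{\langle u_i,u_j\rangle/\tau}\ge e^{0}=1$, so $A_i^-\ge n_i$. Equality holds if and only if every negative has $\langle u_i,u_j\rangle=0$, because $t\mapsto e^{t/\tau}$ is strictly increasing.

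For the positive mass, each term satisfies $e^{\langle u_i,u_j\rangle/\tau}\le e^{1/\tau}$, so $A_i^+\le p_i e^{1/\tau}$. Equality holds if and only if every positive has $\langle u_i,u_j\rangle=1$. Since the vectors are unit, this is the same as $u_j=u_i$, by the equality case of Cauchy--Schwarz.

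Combining the two bounds gives
\[
\frac{A_i^-}{A_i^+}\ge \frac{n_i}{p_i}e^{-1/\tau},
\]
and monotonicity of $\log(1+\cdot)$ then yields the stated lower bound on $\ell_i$.

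The only point needing care is the equality case. If both conditions hold, the bound is attained directly. Conversely, if equality holds in the final inequality, then it must hold in the ratio bound. Because both the numerator bound and the denominator bound go in the same direction, the ratio bound is tight only if each is tight separately. Each termwise inequality is strict unless its condition holds, so both conditions follow. Validity of the anchor, meaning $p_i,n_i\ge1$, guarantees that both sums are non-empty and that the ratio is well defined.

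Finally comes the interpretive clause. An inner product of $0$ between non-negative vectors forces $u_{i,q}u_{j,q}=0$ for every coordinate $q$, which means the supports are disjoint. An inner product of $1$ means the normalized codes are identical. This is the geometry the proposition describes. The qualifier ``whenever compatible with reconstruction'' is handled by Lemma~\ref{lem:nontrivial}, which ensures the codes are non-zero so that each $u_i$ is defined. I do not expect a genuine obstacle here. The main subtlety is making sure nonnegativity is used to get $\langle u_i,u_j\rangle\ge0$: without it, the negative similarities could go down to $-1$ and the stated bound would be false.
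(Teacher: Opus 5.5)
Your proposal is correct and follows essentially the same route as the paper: it derives $A_i^-\ge n_i$ and $A_i^+\le p_i e^{1/\tau}$ termwise from $0\le\langle u_i,u_j\rangle\le 1$, applies monotonicity of $\log(1+\cdot)$, reads off the equality case termwise, and uses nonnegativity to turn a zero inner product into disjoint supports. Your converse direction of the equality case, where tightness of the ratio bound forces tightness of each mass bound using $p_i,n_i\ge 1$, is more explicit than the paper's, which simply asserts that equality requires both conditions simultaneously.
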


\begin{proof}
Because $u_i,u_j$ are non-negative unit vectors,
\[
0\leq\langle u_i,u_j\rangle\leq1.
\]
Define
\[
A_i^+
=
\sum_{j\in P(i)}
e^{\langle u_i,u_j\rangle/\tau},
\qquad
A_i^-
=
\sum_{j\in N(i)}
e^{\langle u_i,u_j\rangle/\tau}.
\]
Then
\[
A_i^+\leq p_i e^{1/\tau},
\qquad
A_i^-\geq n_i.
\]
Since
\[
\ell_i
=
\log\left(1+\frac{A_i^-}{A_i^+}\right),
\]
we obtain
\[
\ell_i
\geq
\log\left(
1+\frac{n_i}{p_i}e^{-1/\tau}
\right).
\]

Equality requires simultaneously
\[
\langle u_i,u_j\rangle=1
\quad (j\in P(i))
\]
and
\[
\langle u_i,u_j\rangle=0
\quad (j\in N(i)).
\]
For unit vectors, the first condition implies $u_i=u_j$. Because all
coordinates are non-negative, the second condition implies
\[
\operatorname{supp}(u_i)
\cap
\operatorname{supp}(u_j)
=
\varnothing.
\]
Thus same-target examples share a normalized representation, whereas
target and background negatives use disjoint coordinates.
\end{proof}

Combined with Lemma~\ref{lem:sparse_energy}, this gives the preferred
geometry of the joint objective: reconstruction keeps the representation
non-trivial, sparsity concentrates it onto few coordinates, and the
contrastive term encourages those coordinates to be shared by examples
of the same target while remaining inactive on background examples.

Importantly, the proposition only separates pairs defined as negatives.
If two forget targets are not contrasted against each other, they may
share coordinates. This is consistent with the shared features observed
empirically in \textsc{SCALPEL}.

\subsection{Near-optimal contrastive loss implies near-optimal geometry}

The previous result describes the exact optimum. We can also quantify how
approaching this optimum forces the representations toward the same
geometry.

Let
\[
R_i=\frac{A_i^-}{A_i^+},
\qquad
R_i^\star
=
\frac{n_i}{p_i}e^{-1/\tau}.
\]
Suppose
\[
R_i
\leq
(1+\delta)R_i^\star,
\qquad
\delta\geq0.
\tag{A2}
\]

\begin{proposition}[Approximate contrastive separation]
\label{prop:approx_contrastive}
Under Eq.~(A2),
\[
\frac{1}{n_i}
\sum_{j\in N(i)}
\langle u_i,u_j\rangle
\leq
\tau\delta,
\]
and
\[
\frac{1}{p_i}
\sum_{j\in P(i)}
\left(
1-\langle u_i,u_j\rangle
\right)
\leq
\frac{\delta}
{(1+\delta)(1-e^{-1/\tau})}.
\]
Consequently,
\[
\frac{1}{p_i}
\sum_{j\in P(i)}
\lVert u_i-u_j\rVert_2^2
\leq
\frac{2\delta}
{(1+\delta)(1-e^{-1/\tau})}.
\]
Hence, as the contrastive loss approaches its optimum, positive codes
collapse toward one another while their average similarity with
background negatives approaches zero.
\end{proposition}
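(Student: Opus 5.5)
The plan is to work entirely with the scalar similarities $s_j=\langle u_i,u_j\rangle$. As in the proof of Proposition~\ref{prop:contrastive_geometry}, these lie in $[0,1]$ because $u_i,u_j$ are non-negative unit vectors. I will then sandwich the two masses $A_i^+$ and $A_i^-$ using two elementary convexity bounds on $t\mapsto e^{t/\tau}$ over $[0,1]$. The first is the tangent-line lower bound at $0$, namely $e^{s/\tau}\ge 1+s/\tau$. The second is the chord upper bound $e^{s/\tau}\le 1+s\,(e^{1/\tau}-1)$. Each bound will be combined with hypothesis (A2), written in the form $A_i^-\le (1+\delta)\frac{n_i}{p_i}e^{-1/\tau}A_i^+$.

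For the negative bound, I apply the tangent inequality to each negative term. This gives
\[
A_i^-\ \ge\ n_i+\tfrac1\tau\textstyle\sum_{j\in N(i)}s_j .
\]
On the other side, $A_i^+\le p_i e^{1/\tau}$ (as in Proposition~\ref{prop:contrastive_geometry}), so (A2) yields $A_i^-\le(1+\delta)n_i$. Comparing the two bounds gives $\frac1\tau\sum_{j\in N(i)}s_j\le \delta n_i$. Dividing by $n_i$ gives the first claim, $\frac1{n_i}\sum_{j\in N(i)}s_j\le\tau\delta$.

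For the positive bound, I run (A2) in the other direction to get a lower bound on $A_i^+$. Using $A_i^-\ge n_i$, I obtain
\[
A_i^+\ \ge\ \frac{A_i^-}{(1+\delta)\frac{n_i}{p_i}e^{-1/\tau}}\ \ge\ \frac{p_i e^{1/\tau}}{1+\delta}.
\]
For the matching upper bound, I apply the chord inequality to each positive term. This gives
\[
A_i^+\le p_i+(e^{1/\tau}-1)\sum_{j\in P(i)}s_j=p_i e^{1/\tau}-(e^{1/\tau}-1)\sum_{j\in P(i)}(1-s_j).
\]
Combining the two bounds yields
\[
(e^{1/\tau}-1)\sum_{j\in P(i)}(1-s_j)\le p_i e^{1/\tau}\Bigl(1-\tfrac1{1+\delta}\Bigr)=p_i e^{1/\tau}\tfrac{\delta}{1+\delta}.
\]
Dividing by $p_i(e^{1/\tau}-1)=p_i e^{1/\tau}(1-e^{-1/\tau})$ gives exactly the stated bound on $\frac1{p_i}\sum_{j\in P(i)}(1-s_j)$. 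The final statement then follows from the unit-norm identity $\lVert u_i-u_j\rVert_2^2=2(1-\langle u_i,u_j\rangle)$, which accounts for the factor $2$.

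The only real obstacle is choosing the right linearization of the exponential in each direction. The negative bound needs a lower bound on $A_i^-$ that is linear in the similarities, which is the tangent line. The positive bound needs an upper bound on $A_i^+$ that is linear in $1-s_j$ and tight at $s_j=1$, which is the chord. The constant $(1-e^{-1/\tau})^{-1}$ in the target inequality is precisely what the chord slope $e^{1/\tau}-1$ produces, and that match is why I commit to this choice.
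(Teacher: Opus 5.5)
Your proof is correct and follows the paper's argument. Plugging the trivial bound on the opposite mass is exactly the paper's factorization of $R_i/R_i^\star$ into $A_i^-/n_i$ and $p_i e^{1/\tau}/A_i^+$, each at least $1$, and your chord bound $e^{s/\tau}\le 1+s(e^{1/\tau}-1)$ is the paper's concavity inequality $1-e^{-x/\tau}\ge(1-e^{-1/\tau})x$ with $x=1-s$, multiplied by $e^{1/\tau}$; the closing qualitative claim, left implicit in your write-up, follows from $\ell_i=\log(1+R_i)$, so the smallest admissible $\delta$ tends to $0$ as $\ell_i$ approaches $\log(1+R_i^\star)$.
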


\begin{proof}
Observe that
\[
\frac{R_i}{R_i^\star}
=
\underbrace{
\frac{A_i^-}{n_i}
}_{\geq1}
\underbrace{
\frac{p_i e^{1/\tau}}{A_i^+}
}_{\geq1}.
\]
If their product is at most $1+\delta$, each factor is itself at most
$1+\delta$.

For the negatives,
\[
\frac{1}{n_i}
\sum_{j\in N(i)}
e^{\langle u_i,u_j\rangle/\tau}
\leq
1+\delta.
\]
Using $e^x\geq1+x$ gives
\[
\frac{1}{n_i}
\sum_{j\in N(i)}
\langle u_i,u_j\rangle
\leq
\tau\delta.
\]

For the positives,
\[
\frac{1}{p_i}
\sum_{j\in P(i)}
e^{-(1-\langle u_i,u_j\rangle)/\tau}
\geq
\frac{1}{1+\delta}.
\]
For $x\in[0,1]$, concavity of $1-e^{-x/\tau}$ gives
\[
1-e^{-x/\tau}
\geq
(1-e^{-1/\tau})x.
\]
Therefore,
\[
(1-e^{-1/\tau})
\frac{1}{p_i}
\sum_{j\in P(i)}
\left(1-\langle u_i,u_j\rangle\right)
\leq
1-\frac{1}{1+\delta}
=
\frac{\delta}{1+\delta},
\]
which proves the second inequality. Finally,
\[
\lVert u_i-u_j\rVert_2^2
=
2\left(1-\langle u_i,u_j\rangle\right),
\]
giving the last result.
\end{proof}

Since
\[
\ell_i=\log(1+R_i),
\]
condition~(A2) is equivalently a bound on the excess contrastive loss:
$\delta\rightarrow0$ whenever
$\ell_i\rightarrow\ell_i^\star$.

\subsection{Connection to the selectivity score}

Recall the \textsc{SCALPEL} score
\[
s_q(G_a)
=
\frac{\mu_q(G_a)}
{\mu_q(\mathrm{bg})+\epsilon},
\]
where
\[
\mu_q(G_a)
=
\mathbb{E}_{i\in G_a}
[\widetilde z_q(h_i)],
\qquad
\mu_q(\mathrm{bg})
=
\mathbb{E}_{j\in\mathrm{bg}}
[\widetilde z_q(h_j)].
\]

At the ideal sparse contrastive geometry, suppose target $G_a$ uses
coordinate $q$. Reconstruction ensures a non-zero code scale, sparsity
concentrates the target representation on $q$, and
Proposition~\ref{prop:contrastive_geometry} gives
\[
\widetilde z_q(h_i)>0
\quad\text{for }i\in G_a,
\qquad
\widetilde z_q(h_j)=0
\quad\text{for }j\in\mathrm{bg}.
\]
Hence
\[
\mu_q(G_a)>0,
\qquad
\mu_q(\mathrm{bg})=0,
\]
and therefore
\[
s_q(G_a)
=
\frac{\mu_q(G_a)}{\epsilon}.
\]

Thus the geometry favored by the joint objective is exactly the geometry
rewarded by the selectivity criterion: the sparsity term concentrates
target information onto few coordinates, while the contrastive term
increases agreement within the target and suppresses overlap of those
coordinates with background examples. Proposition~\ref{prop:approx_contrastive}
further shows that this behavior emerges continuously as the contrastive
loss approaches its optimum.

\section{Selectivity and Intervention Safety}
\label{App:score_unlearing}

This appendix proves Proposition~\ref{prop:retain-damage} of the main
text: the selectivity score used by \textsc{SCALPEL} controls the
expected magnitude of the perturbation applied to background activations.
Together with the sparse contrastive analysis of
Appendix~\ref{app:sparse_contrastive}, it completes the
training-to-intervention argument: contrastive learning favors sparse
representations with stronger within-target alignment and lower
target--background overlap, while selecting high-score features limits
their expected footprint on retain and Wiki representations under
intervention.

\subsection{Setting}

We use the notation of Sections~\ref{sec:contrastive-sae}
and~\ref{sec:feature-selection}. For a standardized activation
$\overline h$, the encoder produces a non-negative sparse code
$z(\overline h)\in\mathbb{R}_+^Q$, so that
\begin{equation}
  z_q(\overline h) \;\ge\; 0
  \qquad\text{for all } q,\ \overline h,
  \label{eq:app-nonneg}
\end{equation}
and $d_q\in\mathbb{R}^d$ is the decoder direction of feature $q$. For
an author-level target $G_a$, with
$\overline{\mathcal H}_{\mathrm{bg}}
 = \overline{\mathcal H}_r\cup\overline{\mathcal H}_w$
the standardized background activations, recall the mean norm-aware
activations and the selectivity score
(Eq.~\eqref{eq:score} of the main text),
\[
  \mu_q(G_a)
  = \mathbb{E}_{\overline h\sim\overline{\mathcal H}_{G_a}}
    \bigl[z_q(\overline h)\,\lVert d_q\rVert_2\bigr],
  \qquad
  \mu_q(\mathrm{bg})
  = \mathbb{E}_{\overline h\sim\overline{\mathcal H}_{\mathrm{bg}}}
    \bigl[z_q(\overline h)\,\lVert d_q\rVert_2\bigr],
  \qquad
  s_q(G_a)
  = \frac{\mu_q(G_a)}{\mu_q(\mathrm{bg})+\varepsilon},
\]
with $\varepsilon>0$. \textsc{SCALPEL} selects
$S_{G_a} = \{q : s_q(G_a) \ge s_{\min}\}$ for a threshold
$s_{\min}>0$, and the intervention applies the perturbation
\begin{equation}
  \delta_{G_a}(\overline h)
  \;=\;
  \alpha \sum_{q\in S_{G_a}} z_q(\overline h)\, d_q,
  \qquad \alpha \ge 0.
  \label{eq:app-perturbation}
\end{equation}

\subsection{The perturbation bound}

\begin{proposition}[Selectivity controls background perturbation;
Proposition~\ref{prop:retain-damage} of the main text]
\label{prop:app-perturbation}
If every selected feature satisfies $s_q(G_a)\ge s_{\min}$, then
\begin{equation}
  \mathbb{E}_{\overline h\sim\overline{\mathcal H}_{\mathrm{bg}}}
  \bigl[\lVert \delta_{G_a}(\overline h)\rVert_2\bigr]
  \;\le\;
  \alpha \sum_{q\in S_{G_a}} \mu_q(\mathrm{bg})
  \;\le\;
  \frac{\alpha}{s_{\min}} \sum_{q\in S_{G_a}} \mu_q(G_a)
  \;-\;
  \alpha\,\varepsilon\,\bigl|S_{G_a}\bigr|
  \;\le\;
  \frac{\alpha}{s_{\min}} \sum_{q\in S_{G_a}} \mu_q(G_a).
  \label{eq:app-bound}
\end{equation}
\end{proposition}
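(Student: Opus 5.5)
The plan is to establish the three inequalities in Eq.~\eqref{eq:app-bound} one at a time. The first is a pointwise triangle inequality followed by taking the expectation. The second is an algebraic rearrangement of the threshold condition defining $S_{G_a}$. The third simply drops a non-positive term.

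\emph{Step 1 (pointwise bound).} Fix a standardized background activation $\overline h$. Apply the triangle inequality to Eq.~\eqref{eq:app-perturbation}, using $\alpha\ge 0$ and the non-negativity of the codes in Eq.~\eqref{eq:app-nonneg}, so that $\lvert z_q(\overline h)\rvert = z_q(\overline h)$. This gives
\[
\lVert \delta_{G_a}(\overline h)\rVert_2 \le \alpha \sum_{q\in S_{G_a}} z_q(\overline h)\,\lVert d_q\rVert_2 = \alpha \sum_{q\in S_{G_a}} \widetilde z_q(\overline h).
\]
Take the expectation over $\overline{\mathcal H}_{\mathrm{bg}}$ and use linearity; the sum is finite, and $S_{G_a}$ is a fixed set that does not depend on $\overline h$. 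The right-hand side becomes exactly $\alpha\sum_{q\in S_{G_a}}\mu_q(\mathrm{bg})$ by the definition of $\mu_q(\mathrm{bg})$, which is the first inequality. Non-negativity is the only structural property used here. It is what allows the absolute value to be removed and the bound to be written in terms of the same norm-aware means that enter the score.

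\emph{Step 2 (using the selectivity threshold).} For each $q\in S_{G_a}$ we have $s_q(G_a)\ge s_{\min}$ with $\mu_q(\mathrm{bg})+\varepsilon>0$. Multiplying through by this positive denominator gives $\mu_q(G_a)\ge s_{\min}\bigl(\mu_q(\mathrm{bg})+\varepsilon\bigr)$. Dividing by $s_{\min}>0$ then gives
\[
\mu_q(\mathrm{bg}) \le \frac{\mu_q(G_a)}{s_{\min}} - \varepsilon .
\]
Summing over $q\in S_{G_a}$ and multiplying by $\alpha\ge0$ yields the middle expression $\frac{\alpha}{s_{\min}}\sum_{q}\mu_q(G_a)-\alpha\varepsilon\lvert S_{G_a}\rvert$.

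\emph{Step 3 (final bound).} Since $\alpha\varepsilon\lvert S_{G_a}\rvert\ge 0$, dropping this term gives the last inequality.

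No step is a genuine obstacle; the argument is elementary. The only points needing care are bookkeeping ones. First, the two sign conditions must be invoked explicitly: non-negative codes in Step~1 and a positive denominator in Step~2. Second, the expectations $\mu_q(\mathrm{bg})$ must be finite so that linearity applies; this is automatic for the finite caches used in practice. Third, the degenerate case $S_{G_a}=\varnothing$ gives zero on every side. If the top-feature fallback adds a feature that does not meet the threshold, the second and third inequalities no longer apply to it, which is why the statement is conditioned on every selected feature satisfying $s_q(G_a)\ge s_{\min}$.
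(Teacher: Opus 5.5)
Your proposal is correct and follows the paper's proof essentially step for step. Both use the pointwise triangle inequality with non-negative codes, then expectation and linearity, then the rearranged threshold $\mu_q(\mathrm{bg})\le \mu_q(G_a)/s_{\min}-\varepsilon$ summed over $S_{G_a}$, and finally drop the non-negative term $\alpha\varepsilon|S_{G_a}|$. Your extra bookkeeping (the positive denominator, the empty-set case, and why the top-feature fallback is excluded by the hypothesis) is accurate but not needed beyond what the paper states.
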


\begin{proof}
By the triangle inequality and the non-negativity of the codes
(Eq.~\eqref{eq:app-nonneg}), for every $\overline h$,
\[
  \lVert \delta_{G_a}(\overline h)\rVert_2
  \;\le\;
  \alpha \sum_{q\in S_{G_a}}
  z_q(\overline h)\,\lVert d_q\rVert_2 .
\]
Taking the expectation over
$\overline h\sim\overline{\mathcal H}_{\mathrm{bg}}$ and using
linearity gives the first inequality of Eq.~\eqref{eq:app-bound}. For
the second, each selected feature satisfies
$s_q(G_a) = \mu_q(G_a)/(\mu_q(\mathrm{bg})+\varepsilon)
 \ge s_{\min}$, hence
\[
  \mu_q(\mathrm{bg})
  \;\le\;
  \frac{\mu_q(G_a)}{s_{\min}} - \varepsilon .
\]
Summing over $q\in S_{G_a}$ and multiplying by $\alpha$ yields the
second inequality; dropping the non-negative term
$\alpha\varepsilon|S_{G_a}|$ yields the third.
\end{proof}

\subsection{Discussion}

\begin{remark}[Interpretation and monotonicity]
\label{rem:app-bound-interpretation}
The quantity $\sum_{q\in S_{G_a}}\mu_q(G_a)$ is the total norm-aware
activation mass of the selected features on the target: it is the
natural scale of the intervention, since the same triangle-inequality
argument applied to target activations gives
$\mathbb{E}_{\overline h\sim\overline{\mathcal H}_{G_a}}
 [\lVert\delta_{G_a}(\overline h)\rVert_2]
 \le \alpha\sum_{q\in S_{G_a}}\mu_q(G_a)$.
Proposition~\ref{prop:app-perturbation} states that the corresponding
bound on \emph{background} data is smaller by the factor
$1/s_{\min}$: relative to the activation mass they carry on the
target, high-selectivity features can only perturb background
activations weakly. The bound is monotone in the threshold, raising
$s_{\min}$ tightens it, and the intermediate form in
Eq.~\eqref{eq:app-bound} shows that the constant $\varepsilon$ in the
score buys an additional slack of $\alpha\varepsilon$ per selected
feature.
\end{remark}

\begin{remark}[Standardized versus original residual space]
\label{rem:app-standardization}
The bound is stated in standardized activation space, where the SAE
operates (Section~\ref{sec:activation-extraction}), while the
intervention of Eq.~\eqref{eq:intervention} is applied to the residual
stream. If the decoded contribution is mapped back to the original
space through the fixed standardization, the perturbation becomes
$\alpha\sum_{q\in S_{G_a}} z_q(\overline h)\,
 (\sigma_H \odot d_q)$, and
Proposition~\ref{prop:app-perturbation} holds verbatim with
$\lVert d_q\rVert_2$ replaced by
$\lVert \sigma_H \odot d_q\rVert_2
 \le \lVert\sigma_H\rVert_\infty\,\lVert d_q\rVert_2$ in the
definition of $\mu_q$; in particular the same bound holds up to the
fixed factor $\lVert\sigma_H\rVert_\infty$. The selectivity ratio
$s_q(G_a)$, and hence the $1/s_{\min}$ gain, is unchanged by any
per-feature rescaling of $\lVert d_q\rVert_2$, since the rescaling
appears in both the numerator and the denominator of the score.
\end{remark}

\begin{remark}[What the bound does and does not guarantee]
\label{rem:app-bound-scope}
Proposition~\ref{prop:app-perturbation} is a
\emph{representation-level safety bound}, and its scope should be read
precisely. First, it controls the \emph{expected} perturbation
magnitude over background activations, not the worst case: individual
background tokens on which a selected feature fires strongly can still
be perturbed substantially. Second, it aggregates over the selected
set, so the bound grows with $|S_{G_a}|$; selecting few, highly
selective features is favored on both counts. Third, and most
importantly, it says nothing about model \emph{behavior}: relating a
perturbation at layer $\ell$ to a change in the output distribution
would require additional assumptions on the downstream layers. Fourth,
the score measures activation magnitude and decoder norm but not the
\emph{orientation} of $d_q$ relative to the local geometry of the
forget and background manifolds; two features with similar scores can
therefore produce different behavioral effects, a variability that
Section~\ref{sec:score-results} measures directly and that a
geometry-aware score could reduce at substantially higher cost
(Appendix~\ref{app:limitations_full}).
\end{remark}

\begin{remark}[Connection to the training analysis]
\label{rem:app-bound-training}
Appendix~\ref{app:sparse_contrastive} shows that the contrastive objective
favors sparse representations with stronger within-target alignment and
lower target--background overlap, thereby promoting features with higher
selectivity scores $s_q(G_a)$. Proposition~\ref{prop:app-perturbation}
completes the argument: selecting features above a threshold
$s_{\min}$ limits their expected perturbation on background
representations relative to their activation on the target.
\end{remark}

\section{UMAP : mainfold of the datasets}
\label{app:UMAP}
\begin{figure}[!ht]
    \centering
    \includegraphics[width=0.72\linewidth]{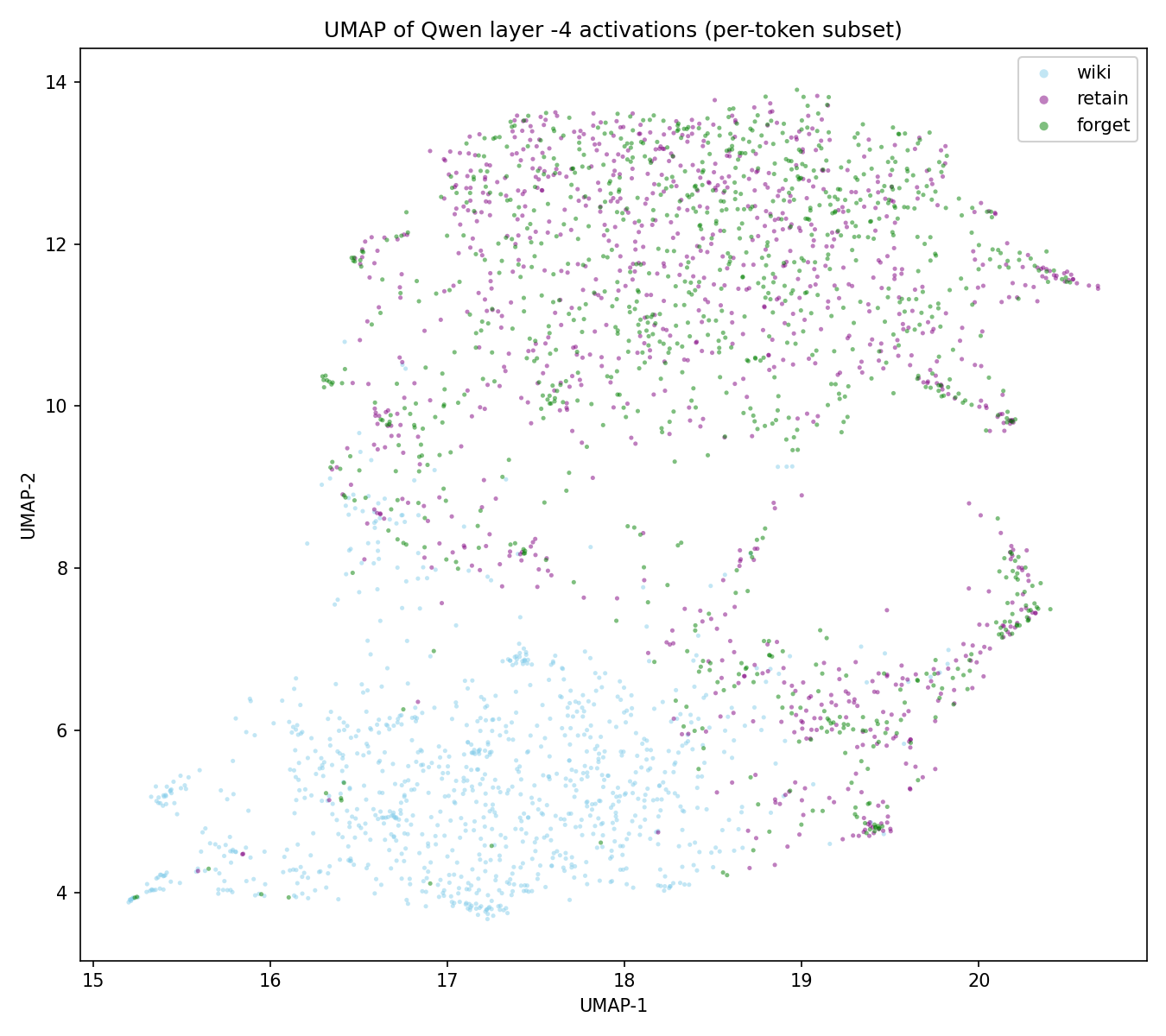}
    \caption{
    UMAP projection~\citep{mcinnes2018umap} of residual-stream activations
    from the forget, retain, and Wiki sets. Forget and retain examples
    strongly overlap because they come from the same generation process
    and differ only through the random assignment of authors to the
    splits: the full forget set is not a globally separable concept in
    activation space.
    }
    \label{fig:umap-target-granularity}
\end{figure}

\section{Hyperparameter Optimization Protocol and Search Spaces}
\label{app:hpo}
 
This appendix specifies the complete model-selection protocol. It has
two levels. \textbf{Level 1} is the hyperparameter search: one
independent Optuna study~\citep{akiba2019optuna} per
(model~$\times$~method) pair ,  $3$ models $\times$ ($5$ SAE-family
methods $+$ $2$ trainer methods) $=21$ studies ,  all run with a
5 seeds (0-4). A study never samples over methods: each method is
pinned to its own study and therefore receives its own full trial
budget. \textbf{Level 2} freezes the winning configuration of each
study and reruns the complete pipeline over five independent seeds;
its essentials are summarized in
Section~\ref{sec:app-hpo-multiseed} below, and the reported results
come exclusively from Level 2.
 
\subsection{Selection objective}
\label{sec:app-hpo-objective}
 
All $21$ studies maximize the same scalar, the
\emph{retain--forget margin}
\begin{equation}
  \mathrm{margin}
  \;=\;
  \tfrac{1}{2}\bigl(P_{\mathrm{retain}} + P_{\mathrm{holdout}}\bigr)
  \;-\;
  P_{\mathrm{forget}},
  \label{eq:app-hpo-margin}
\end{equation}
where $P_{\mathrm{forget}}$, $P_{\mathrm{retain}}$, and
$P_{\mathrm{holdout}}$ are teacher-forced ground-truth answer
probabilities on OpenUnlearning's own probe sets
(\texttt{forget05} and \texttt{holdout05}; probe batch size $32$;
parity with the OpenUnlearning data verified by a dedicated check).
Model selection never uses generation-based or test metrics, so the
selection signal is decoupled from the reported evaluation. An earlier
constrained objective (smallest $\alpha$ such that
$P_{\mathrm{forget}}$ falls below a threshold) is kept in the codebase
for diagnostics only; all reported studies use the margin of
Eq.~\eqref{eq:app-hpo-margin}.
 
\paragraph{Operating point within a trial (SAE track).}
For representation-based methods, the free knob inside a trial is the
intervention strength $\alpha$. Each trial trains the extractor,
selects features, then evaluates a coarse grid
$\alpha \in \{0, 2, 4, 8, 16, 32\}$ in a single prefix-shared forward
pass, followed by golden-section refinement (ratio $0.618$) on the
bracket around the coarse argmax, with a budget of at most $8$
refinement evaluations and a relative resolution of $0.05$. The trial
value is the best margin over all evaluated $\alpha$. This fast,
margin-maximizing search is used only for model selection; it is
distinct from the log-space bisection of
Appendix~\ref{app:alpha_search}, which traces the full trade-off
curves reported in the benchmark.
 
\paragraph{Operating point within a trial (trainer track).}
GradDiff and RMU have no $\alpha$; the free knob is the training
checkpoint. Every checkpoint (stride $1$) is scored with the same
margin, and the trial value is the maximum over
\emph{non-degenerate} checkpoints. A degeneracy guard rejects
checkpoints whose forget probability falls below $0.1\times$ the
retain-reference threshold as collapsed rather than unlearned (the
threshold is computed from a retain-only finetuned reference); if all
checkpoints are degenerate, the trial receives a penalty of $-10^{9}$.
 
\subsection{Search spaces}
\label{sec:app-hpo-spaces}
 
Table~\ref{tab:app-hpo-sae} gives the SAE-family search space and
Table~\ref{tab:app-hpo-trainers} the trainer search space; the
distributions below are exactly those recorded in the study
databases.
 
\begin{table}[h]
\centering
\caption{SAE-family search space (per trial). ``Trained SAEs'' means
all methods except NMF; ``contrastive family'' means all contrastive
variants.}
\label{tab:app-hpo-sae}
\small
\begin{tabular}{llll}
\toprule
Hyperparameter & Range / choices & Sampling & Applies to \\
\midrule
Latent dimension $Q$ & $\{64, 512, 1024, 2048, 4096, 8192\}$ & categorical & all \\
Selection threshold $s_{\min}$ & $[1.0,\,100.0]$ & uniform & all \\
Batch size & $\{64, 128, 256, 512\}$ & categorical & trained SAEs \\
Training epochs & $[5,\,40]$ & int.\ uniform & trained SAEs \\
Encoder depth & $\{1, 2\}$ & categorical & trained SAEs \\
Sparsity coefficient $\lambda$ & $[10^{-4},\,10^{-1}]$ & log-uniform & trained SAEs \\
Temperature $\tau$ & $\{0.1, 0.5\}$ & categorical & contrastive family \\
Contrastive weight $\beta$ & $[10^{-4},\,1.0]$ & log-uniform & contrastive SAE \\
Common-block fraction & $\{0.25, 0.5, 0.75\}$ & categorical & Split  \\
$\beta_{\mathrm{com}}$, $\beta_{\mathrm{diff}}$ & $[10^{-4},\,1.0]$ each & log-uniform & Split\\
NMF max iterations & $[50,\,500]$ & int.\ uniform & NMF \\
\bottomrule
\end{tabular}
\end{table}
 
\emph{Fixed (not searched), SAE family:} learning rate $10^{-3}$;
extraction layer $-4$ (fourth from last); Wiki reconstruction weight
$\lambda_{\mathrm{wiki}}=1.0$, so the training loss is
$\mathrm{MSE}(\mathrm{TOFU})
 + \lambda_{\mathrm{wiki}}\,\mathrm{MSE}(\mathrm{Wiki})
 + \lambda\,\mathcal{L}_{\mathrm{sparse}}
 + \text{contrastive terms}$;
author-level contrastive grouping
(Section~\ref{sec:target-granularity}); per-dimension z-score input
normalization computed once from the TOFU token cache and frozen
across trials; feature selection by the per-author norm-aware
forget/background ratio of Eq.~\eqref{eq:score} with top-1 fallback,
using at most $20{,}000$ tokens per side.
 \begin{table}[h]
\centering
\caption{Trainer search space (GradDiff and RMU). The names
\texttt{gamma} and \texttt{alpha} are the trainer implementation's
loss weights and are unrelated to the paper's intervention strength
$\alpha$ and norm exponent $\gamma$.}
\label{tab:app-hpo-trainers}
\small
\begin{tabular}{llll}
\toprule
Hyperparameter & Range / choices & Sampling & Method \\
\midrule
Learning rate & $[10^{-6},\,10^{-4}]$ & log-uniform & both \\
Training epochs & $\{10, 15, 20, 25, 30\}$ & categorical & both \\
Forget-loss weight (\texttt{gamma}) & $[0.1,\,5.0]$ & log-uniform & both \\
Retain-loss weight (\texttt{alpha}) & $[0.1,\,5.0]$ & log-uniform & both \\
Retain loss type & $\{\mathrm{NLL}, \mathrm{KL}\}$ & categorical & GradDiff \\
Steering coefficient & $[0.5,\,40.0]$ & log-uniform & RMU \\
\bottomrule
\end{tabular}
\end{table}
 
\emph{Fixed, trainers:} per-device train batch size $4$; RMU retain
loss \texttt{EMBED\_DIFF}; checkpoints saved per epoch (model
weights only); OpenUnlearning's generation-based evaluation disabled
during training. The epoch grid was raised from the default
$\{2,4,6,8,10\}$ to $\{10,\dots,30\}$ because the retain utility of
GradDiff and RMU is non-monotonic in training and recovers late.
 
\subsection{Search algorithm, budget, and pruning}
\label{sec:app-hpo-algo}
 
All studies use Optuna's TPE sampler with seed $0$, direction
\emph{maximize}. SAE-family studies additionally use a median pruner
($3$ startup trials): the pruning signal reported before refinement
is the best margin on the coarse $\alpha$ grid, so a pruned trial
skips the golden-section refinement but its record is still written.
Trainer studies report the same intermediate value but never prune,
since training has already happened by scoring time. The budget is
$25$ trials per SAE-family study and $20$ per trainer study; the
budget is a total target, and resubmitted jobs resume the persistent
study database until it is reached, which makes walltime-limited
cluster jobs safe. For cost control, the base model and the
${\sim}25$\,GB activation cache are loaded once per study rather than
once per trial. A built-in uniform-random searcher over the same
spaces exists as a fallback, but all reported results use
Optuna/TPE.
 
\subsection{Study outcomes and winning configurations}
\label{sec:app-hpo-outcomes}
 
Table~\ref{tab:app-hpo-outcomes} reports, for every study, the number
of completed and pruned trials (pruned trials consume budget) and the
best margin found, as of the reported snapshot; resumable studies
below their trial target are marked by their counts.
 
\begin{table}[h]
\centering
\caption{Per-study outcomes: completed trials / pruned trials / best
margin (Eq.~\eqref{eq:app-hpo-margin}).}
\label{tab:app-hpo-outcomes}
\small
\begin{tabular}{lccc}
\toprule
Method & Qwen2.5 & Llama3.2 & Gemma2 \\
\midrule
Contrastive SAE & $7\,/\,18\,/\,0.528$ & $16\,/\,9\,/\,0.532$ & $7\,/\,18\,/\,0.518$ \\
\textsc{SCALPEL}-Split & $17\,/\,8\,/\,0.526$ & $11\,/\,14\,/\,0.533$ & $9\,/\,15\,/\,0.517$ \\
Standard SAE & $15\,/\,10\,/\,0.073$ & $15\,/\,8\,/\,0.088$ & $12\,/\,13\,/\,0.313$ \\
NMF & $20\,/\,5\,/\,0.020$ & $18\,/\,7\,/\,{\approx}0$ & $11\,/\,14\,/\,0.020$ \\
GradDiff & $20\,/\,0\,/\,0.509$ & $20\,/\,0\,/\,0.517$ & $20\,/\,0\,/\,0.527$ \\
RMU & $20\,/\,0\,/\,0.531$ & $20\,/\,0\,/\,0.526$ & $20\,/\,0\,/\,0.524$ \\
\bottomrule
\end{tabular}
\end{table}
 
The selection margins already anticipate the benchmark:
reconstruction-only extraction (standard SAE, NMF) cannot reach a
useful margin on any model, while the contrastive variants match the
strong trainer baselines at selection time.
 
Each study's winning configuration is copied verbatim into the
experiment configuration by a single generator script; no
configuration is hand-edited. As an example, the winning Qwen
contrastive-SAE configuration is: latent dimension $1024$, encoder
depth $2$, $19$ epochs, batch size $256$, sparsity coefficient
$2.34\times 10^{-3}$, $\tau=0.1$, contrastive weight
$2.95\times 10^{-2}$, selection threshold $12.7$.
 
\paragraph{One documented post-hoc override.}
For GradDiff only, the winning epoch count is replaced by a single
epoch with sub-epoch checkpointing (a checkpoint every $20$ steps,
${\sim}40$ checkpoints, degeneracy floor $0.1$): at the selected
learning rate GradDiff collapses the forget set inside the first
epoch, so per-epoch checkpoints only ever contain already-collapsed
models, and sub-epoch checkpointing is required to resolve the
transition. This is the only manual deviation from the search
winners, and it is applied identically across models and seeds.
 
\subsection{From frozen configurations to the multi-seed benchmark}
\label{sec:app-hpo-multiseed}
 
With hyperparameters frozen, Level 2 runs five independent end-to-end
pipelines per model (finetuning seeds $0$--$4$; unlearning and SAE
seeds fixed to $0$), so the reported error bars cover the full
pipeline, not just the unlearning stage. For each seed: (i) the full
model and the retain-only reference are finetuned from the base
checkpoint (learning rate $5\times 10^{-5}$, $5$ base epochs with a
balanced $\{-1,0,+1\}$ epoch jitter dealt round-robin and shuffled
deterministically, so no jitter level is missed over five seeds; full
and retain share the seed's epoch count to remain comparable); (ii)
the forget side uses all $13{,}000$ augmented forget rows, and the
retain side a seed-deterministic random sample of $64{,}000$ rows
from the augmented retain pool, while probe and evaluation data are
identical across seeds; (iii) a divergence gate verifies that the
full and retain models actually separate on the forget set before
the seed's branch runs; (iv) the activation cache (layer $-4$) and
the retain-reference threshold are re-extracted from that seed's own
models; (v) every method retrains with its frozen configuration and
re-selects its operating point against that seed's threshold , 
SAE-family methods rerun the $\alpha$ search, trainers re-select the
checkpoint; and (vi) each winner receives a full OpenUnlearning TOFU
evaluation and one row in the results registry. The whole pipeline
is emitted as a dependency-ordered SLURM DAG, resumable through
per-stage sentinels. Aggregation follows
Appendix~\ref{app:alpha_search}: per-method mean $\pm$ population
standard deviation across seeds; where a cell has fewer than five
completed seeds at the reported snapshot, we state the per-cell $n$
alongside the error bars.
 
\subsection{Compute footprint}
\label{sec:app-hpo-compute}
 
Each study runs on a single H100 with $64$--$128$\,GB of RAM and $6$
CPUs; SAE-family studies take $4$--$24$\,h of walltime (resumed until
the trial target), trainer studies $12$--$24$\,h. Benchmark stages
run on smaller MIG slices where they fit (SAE training and $\alpha$
search on a \texttt{2g.20gb} slice, raised to \texttt{3g.40gb} for
Gemma-2 due to its hidden size of $2304$ and $256$k vocabulary).
Gemma-2 uses eager attention throughout, as the SDPA soft-capping
implementation produced numerical instabilities.

\section{Benchmarks for Qwen, llama and Gemma}
\subsection{Model Utility vs forget prob}
\label{app:additional_benchmarks}
\begin{figure*}[!ht]
    \centering
    \includegraphics[width=\textwidth]
    {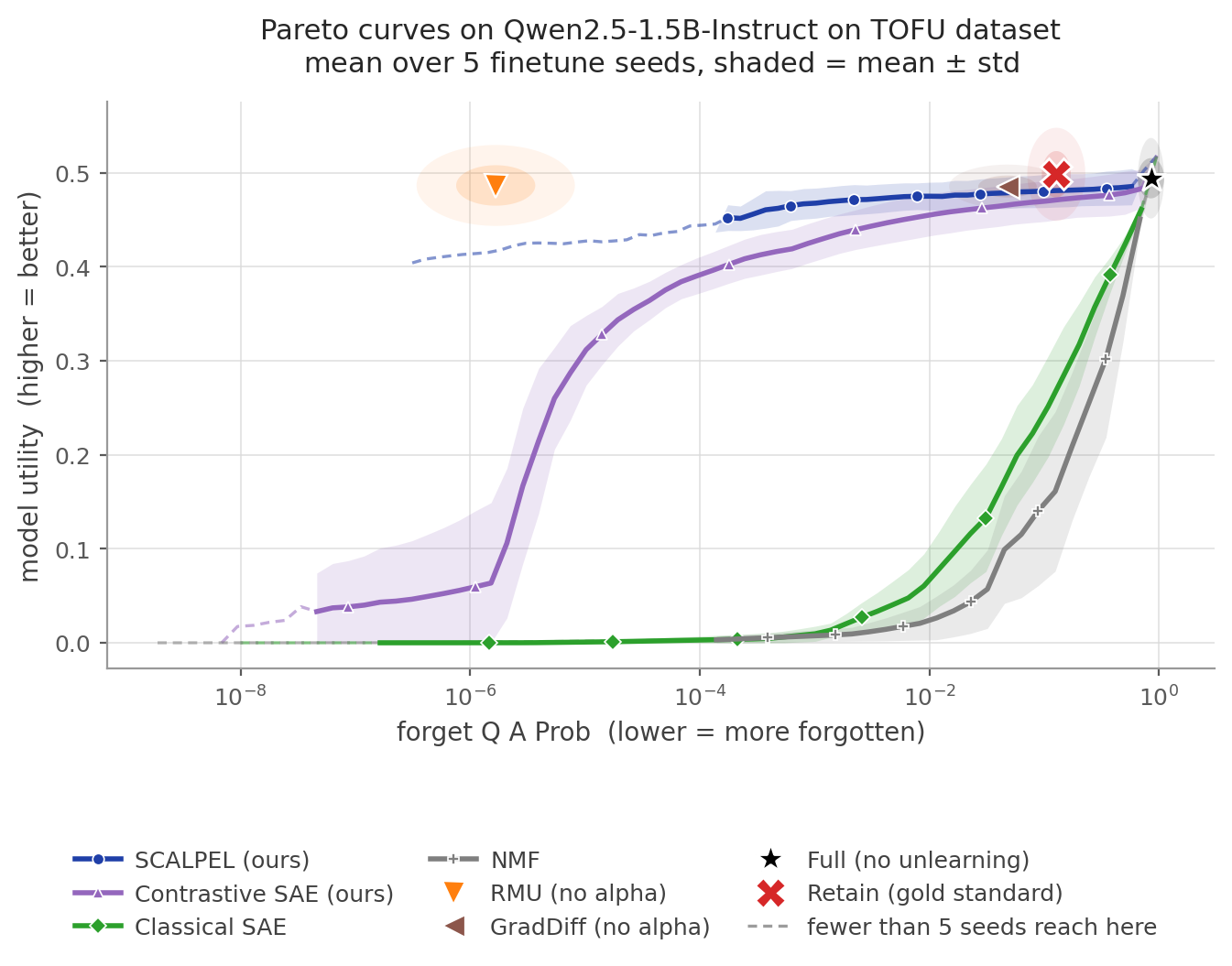}
    \caption{
    Unlearning trade-off on TOFU forget10 for
    \texttt{Qwen2.5-1.5B-Instruct}, after method-specific
    hyperparameter optimization and aggregation over five independent
    end-to-end seeds. Each curve varies the intervention strength:
    lower forget probability indicates stronger forgetting, while higher
    model utility indicates better preservation. Results for Llama and
    Gemma are reported in
    Appendix~\ref{app:additional_benchmarks}.
    }
    \label{fig:benchmark_raw}
\end{figure*}

\begin{figure*}[!ht]
    \centering
    \includegraphics[width=\textwidth]
    {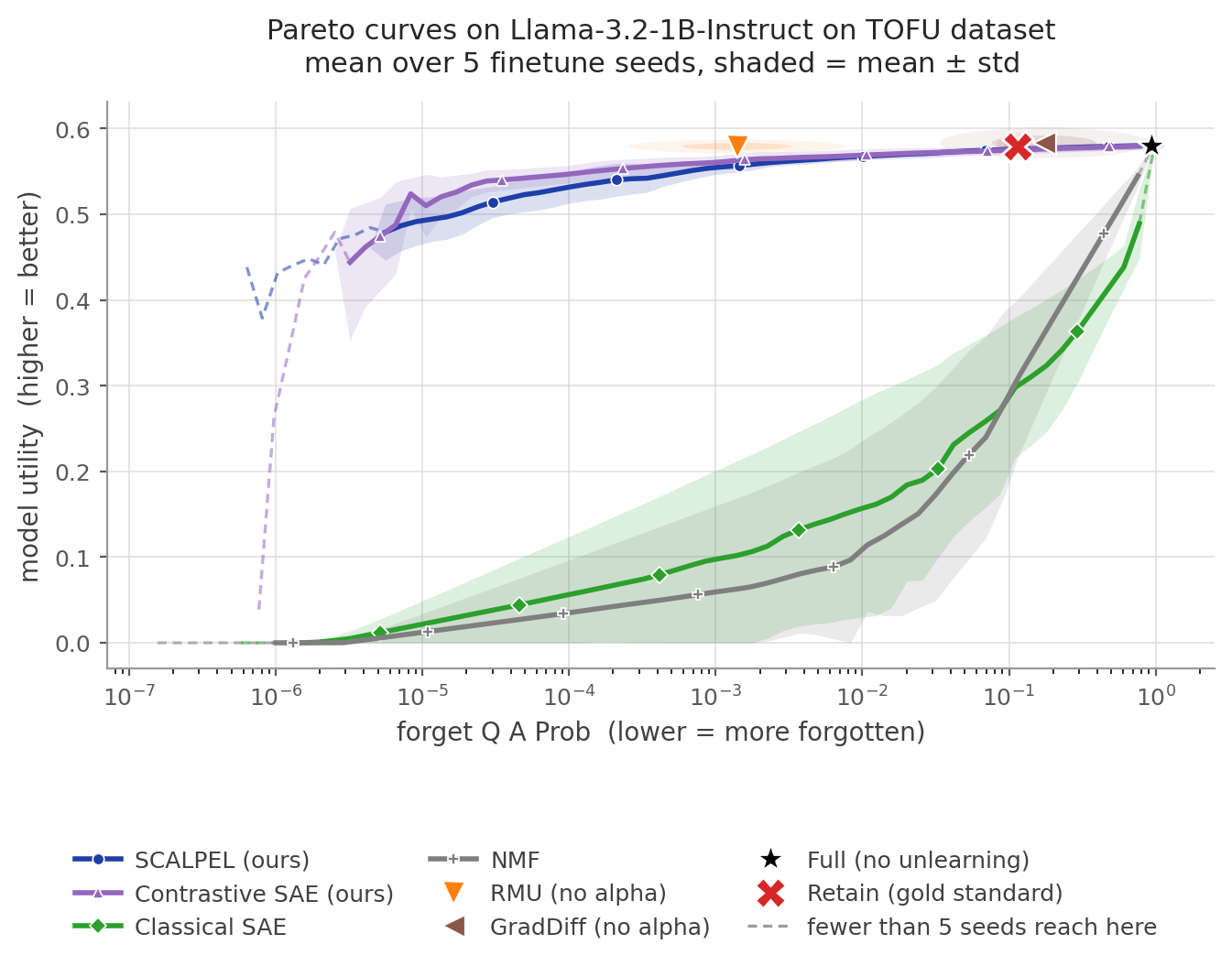}
    \caption{
    Unlearning trade-off on TOFU forget10 for
    \texttt{Llama-3.2-1B-Instruct}, after method-specific
    hyperparameter optimization and aggregation over five independent
    end-to-end seeds. Each curve varies the intervention strength:
    lower forget probability indicates stronger forgetting, while higher
    model utility indicates better preservation.
    }
    \label{fig:benchmark-llama}
\end{figure*}

\begin{figure*}[!ht]
    \centering
    \includegraphics[width=\textwidth]
    {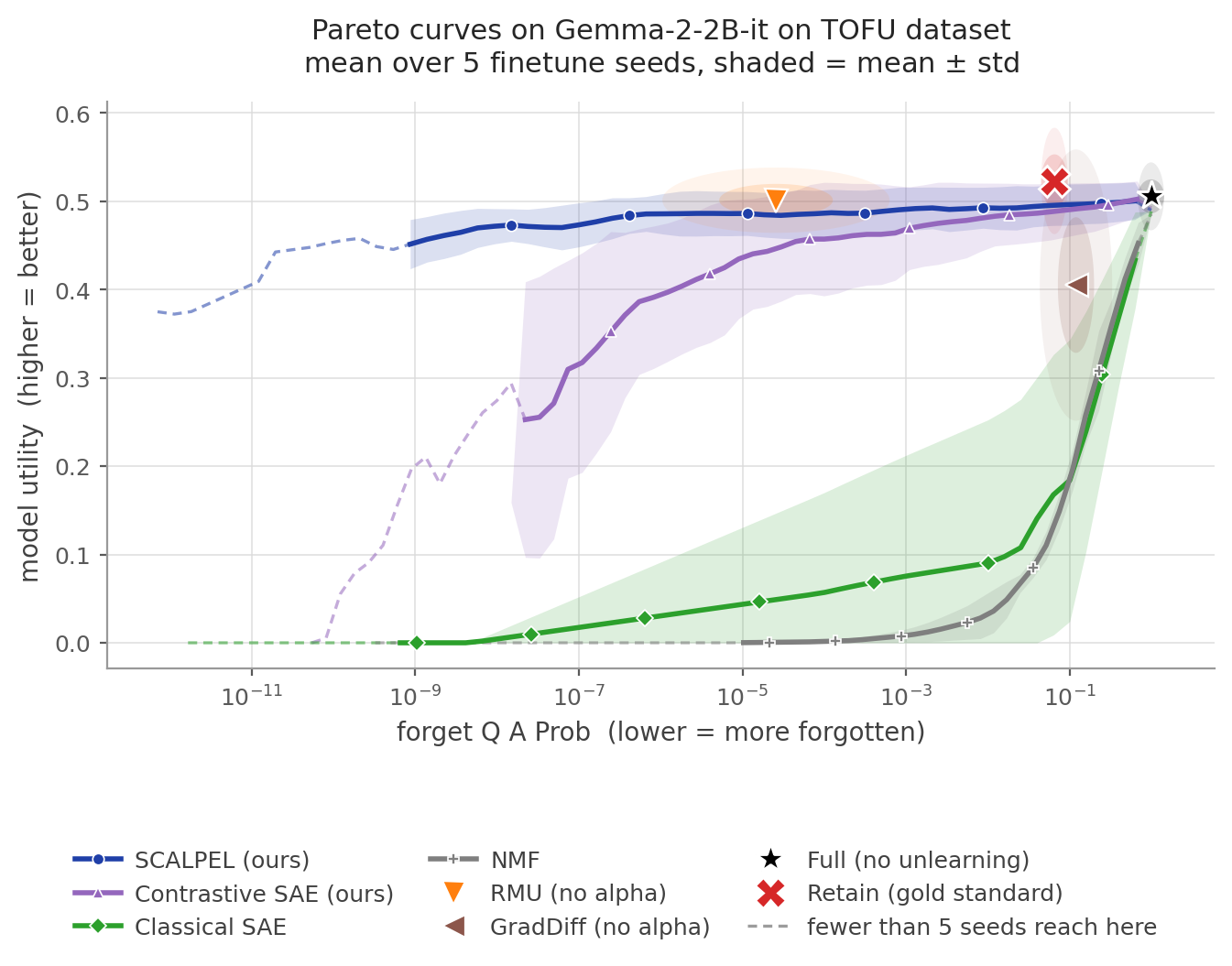}
    \caption{
    Unlearning trade-off on TOFU forget10 for
    \texttt{Gemma-2-2B-it}, after method-specific
    hyperparameter optimization and aggregation over five independent
    end-to-end seeds. Each curve varies the intervention strength:
    lower forget probability indicates stronger forgetting, while higher
    model utility indicates better preservation.
    }
    \label{fig:benchmark-gemma}
\end{figure*}

\subsection{Trade-offs Across Multiple Unlearning Metrics}
\label{app:additional_benchmarks_tradeoff_curves}

\begin{figure*}[t]
    \centering
    \includegraphics[width=\textwidth]
    {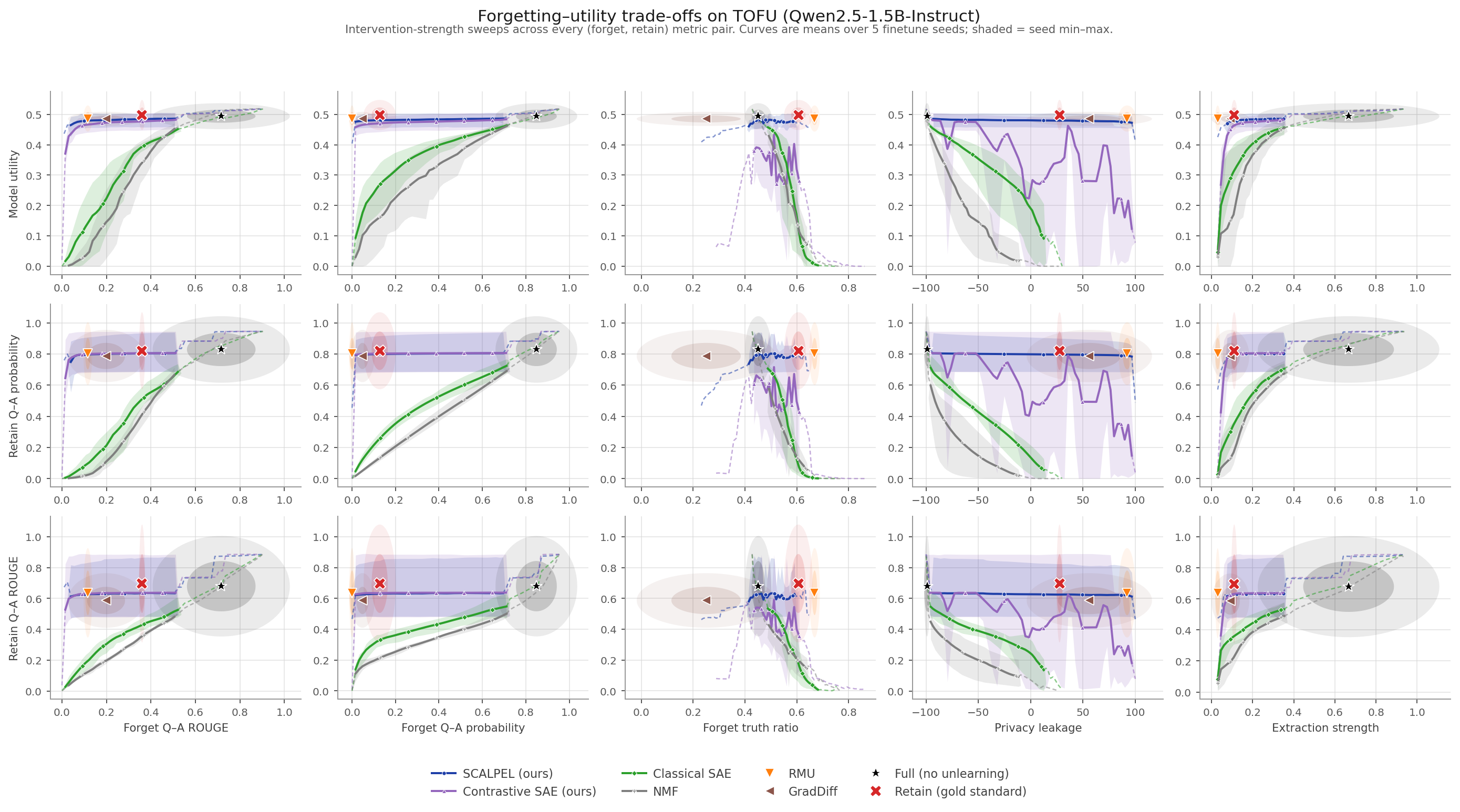}
    \caption{
    Multi-metric forgetting--utility trade-offs on TOFU forget10 for
    \texttt{Qwen2.5-1.5B-Instruct}, after method-specific hyperparameter
    optimization. Curves show the mean over five finetuning seeds, and
    shaded regions show the seed-wise minimum--maximum range.
    Columns correspond to different forget-side metrics, while rows report
    preservation through model utility, retain Q--A probability, and retain
    Q--A ROUGE, for which higher values are better.
    Lower forget Q--A ROUGE and forget Q--A probability indicate stronger
    forgetting. Forget truth ratio and extraction strength are interpreted
    relative to the retain-only reference, while privacy leakage is best
    when close to zero. The black star and red cross denote the full and
    retain-only reference models, respectively; RMU and GradDiff are shown
    at their selected operating points.
    }
    \label{fig:tradeoff-grid-qwen}
\end{figure*}

\begin{figure*}[t]
    \centering
    \includegraphics[width=\textwidth]
    {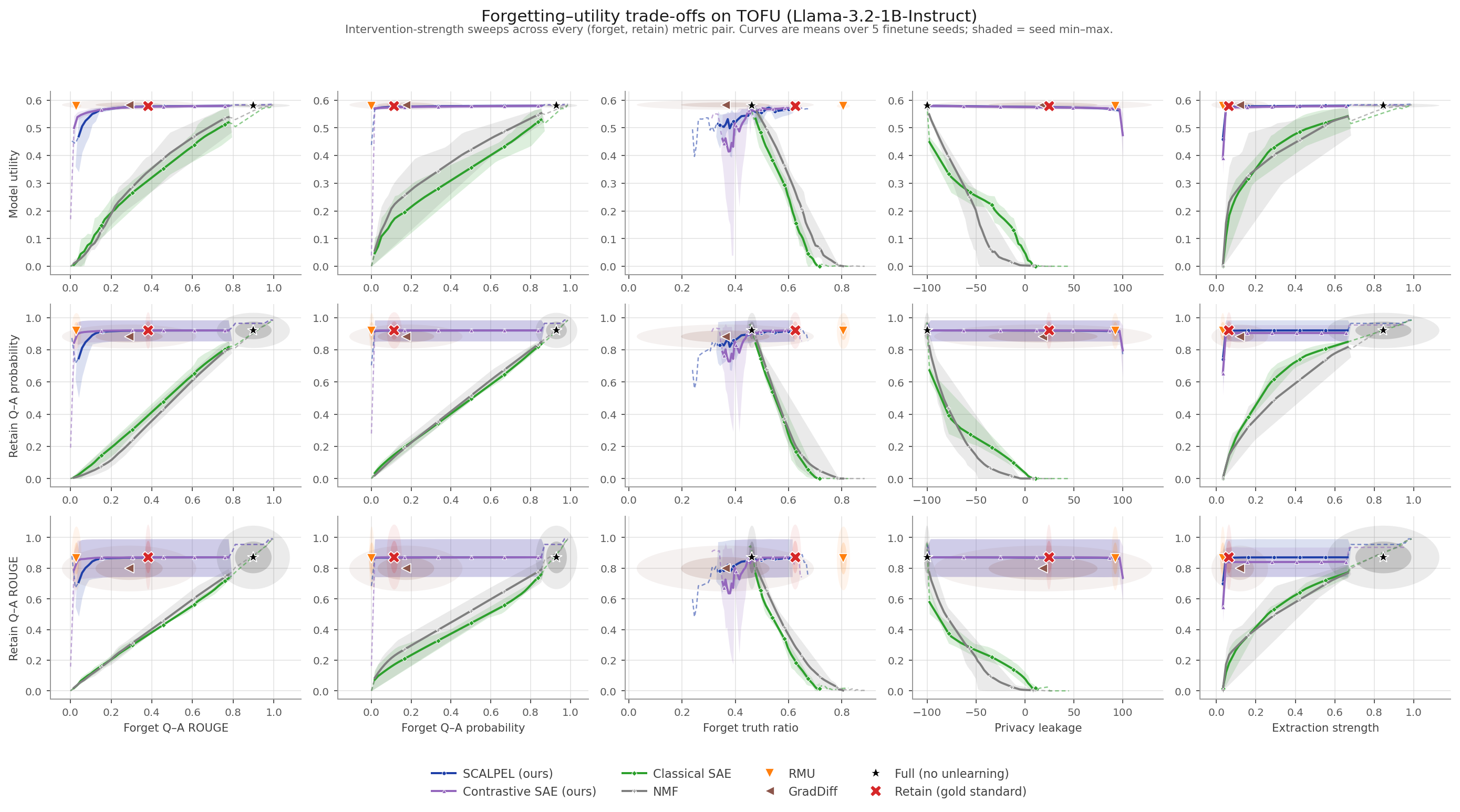}
    \caption{
    Multi-metric forgetting--utility trade-offs on TOFU forget10 for
    \texttt{Llama-3.2-1B-Instruct}, after method-specific hyperparameter
    optimization. Curves show the mean over five finetuning seeds, and
    shaded regions show the seed-wise minimum--maximum range.
    Columns correspond to different forget-side metrics, while rows report
    preservation through model utility, retain Q--A probability, and retain
    Q--A ROUGE, for which higher values are better.
    Lower forget Q--A ROUGE and forget Q--A probability indicate stronger
    forgetting. Forget truth ratio and extraction strength are interpreted
    relative to the retain-only reference, while privacy leakage is best
    when close to zero. The black star and red cross denote the full and
    retain-only reference models, respectively; RMU and GradDiff are shown
    at their selected operating points.
    }
    \label{fig:tradeoff-grid-llama}
\end{figure*}

\begin{figure*}[t]
    \centering
    \includegraphics[width=\textwidth]
    {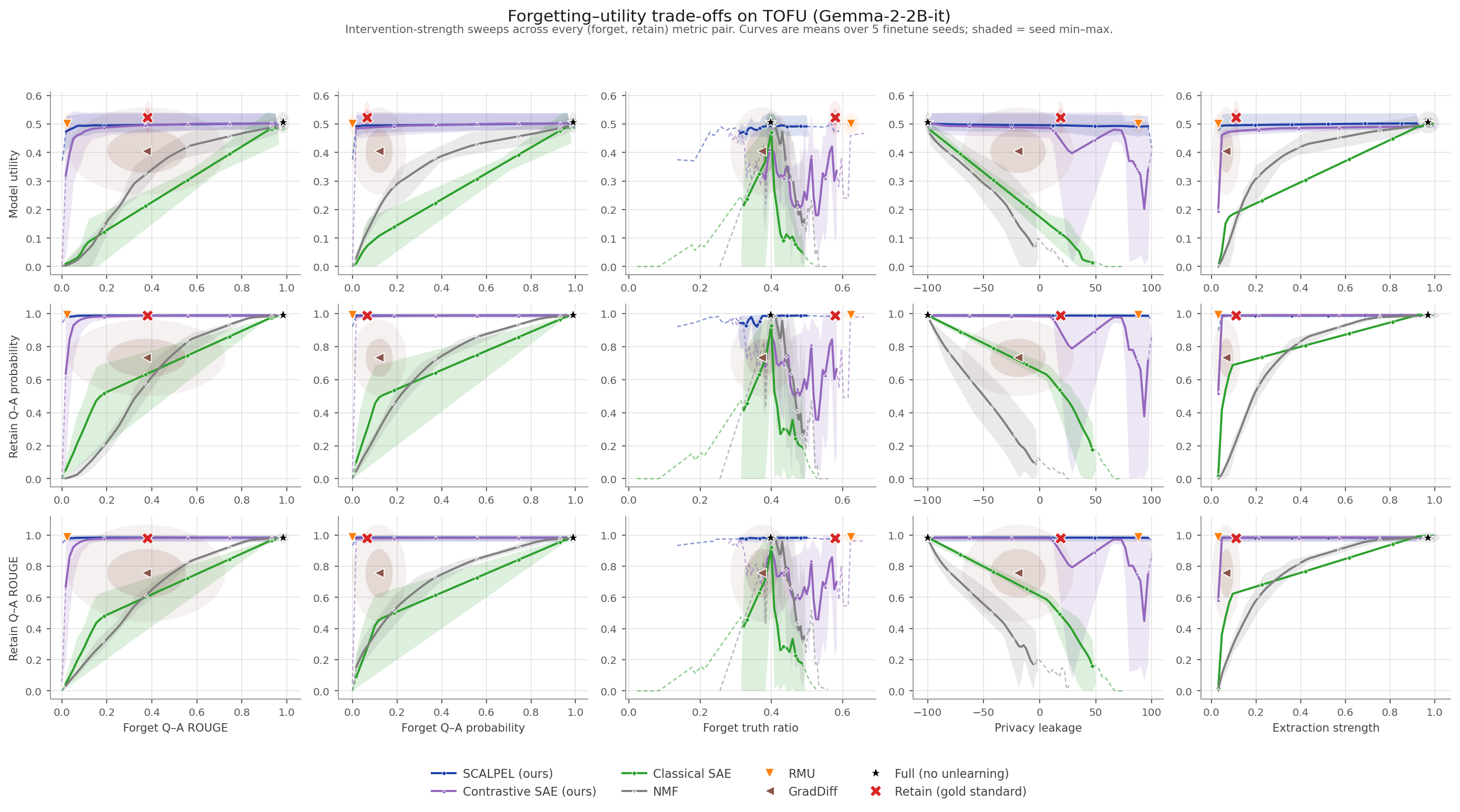}
    \caption{
    Multi-metric forgetting--utility trade-offs on TOFU forget10 for
    \texttt{Gemma-2-2B-it}, after method-specific hyperparameter
    optimization. Curves show the mean over five finetuning seeds, and
    shaded regions show the seed-wise minimum--maximum range.
    Columns correspond to different forget-side metrics, while rows report
    preservation through model utility, retain Q--A probability, and retain
    Q--A ROUGE, for which higher values are better.
    Lower forget Q--A ROUGE and forget Q--A probability indicate stronger
    forgetting. Forget truth ratio and extraction strength are interpreted
    relative to the retain-only reference, while privacy leakage is best
    when close to zero. The black star and red cross denote the full and
    retain-only reference models, respectively; RMU and GradDiff are shown
    at their selected operating points.
    }
    \label{fig:tradeoff-grid-gemma}
\end{figure*}

\section{Feature Selectivity Across Extractors}
\label{app:contrastive-selectivity}

\begin{figure*}[!ht]
    \centering
    \includegraphics[width=0.65\textwidth]
    {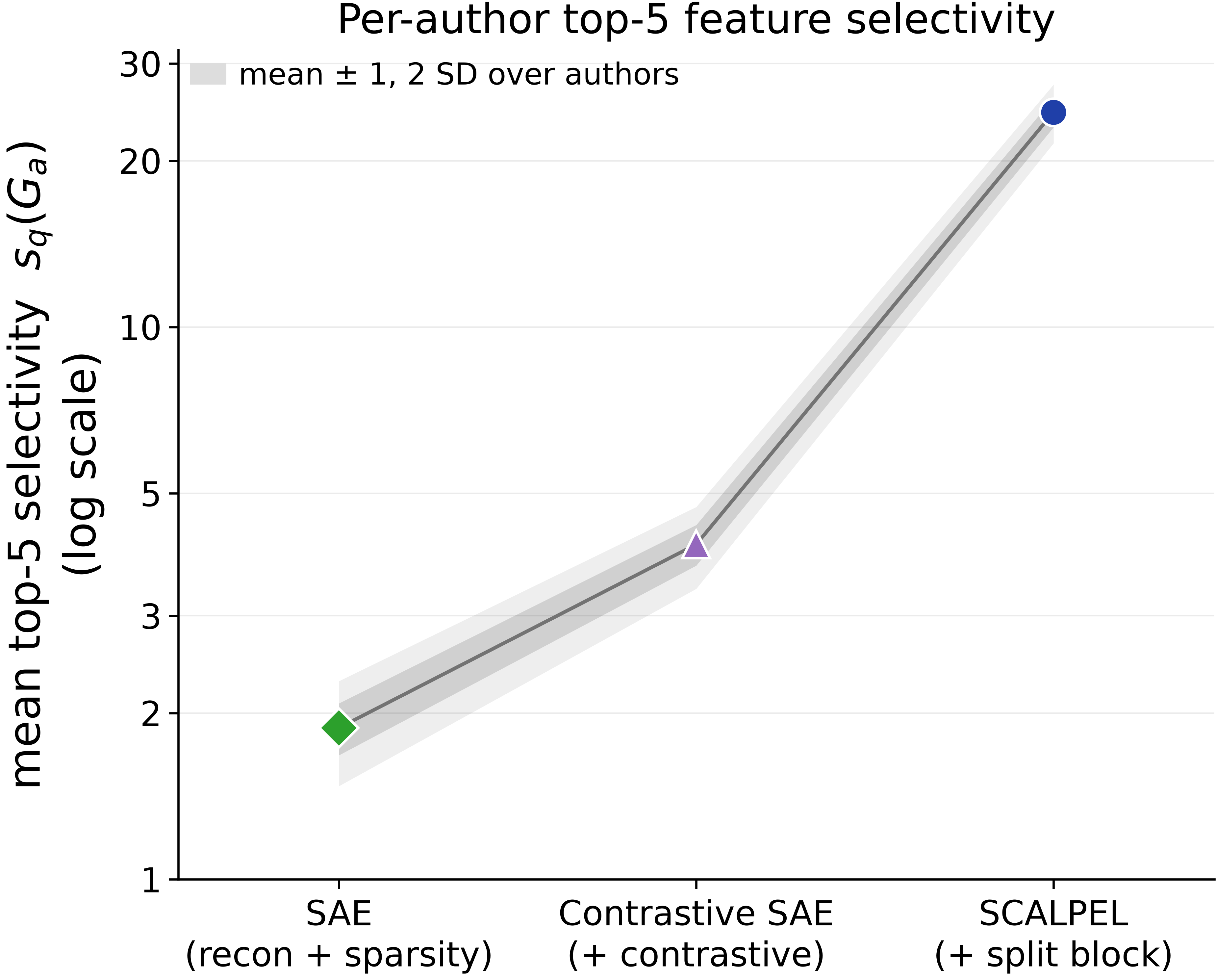}
    \caption{
    Per-author top-5 feature selectivity for the reconstruction-only SAE,
    contrastive SAE, and \textsc{SCALPEL}-Split, evaluated across the five
    Qwen benchmark seeds. Points correspond to the mean top-5
    selectivity score $s_q(G_a)$ for each author, while the shaded region
    summarizes variation across authors. Adding contrastive learning
    increases feature selectivity over the standard SAE, and the split
    architecture further amplifies this effect.
    }
    \label{fig:contrastive-selectivity}
\end{figure*}

Figure~\ref{fig:contrastive-selectivity} complements the representation-level
geometry of Figure~\ref{fig:contrastive-geometry}. The contrastive SAE produces
more selective top-ranked features than the reconstruction-only SAE, while
\textsc{SCALPEL}-Split yields the strongest selectivity. This supports the
interpretation that contrastive learning does not merely reorganize the latent
geometry globally, but translates this separation into individual features that
are more specific to the forget targets.

\section{Adaptive Intervention-Strength Search}
\label{app:alpha_search}

Most methods expose a scalar parameter that controls the strength of
unlearning; we denote it generically by $\alpha$, although its precise
meaning depends on the method (intervention strength for
representation-based methods, update scale for parameter-based ones). A
small value leaves the model close to $\theta_{\mathrm{full}}$, while
a very large value produces near-complete forgetting together with
substantial degradation. A uniform grid over $\alpha$ is inefficient:
the practically relevant transition often occupies a narrow interval,
and $\alpha$ can span several orders of magnitude. We therefore trace
each trade-off curve with an adaptive bisection in $\log\alpha$.

\paragraph{Procedure.}
\begin{enumerate}
  \item \textbf{Endpoints.} Evaluate two endpoints
  $\alpha_{\mathrm{low}}$ and $\alpha_{\mathrm{high}}$, chosen to
  bracket the transition from the nearly unchanged full model to a
  strongly degraded model. Each evaluation runs the OpenUnlearning
  pipeline and returns the pair
  $\bigl(P_f(\alpha), P_r(\alpha)\bigr)$ of forget and retain
  ground-truth probabilities. All evaluated points are cached, so an
  interrupted search resumes without recomputation.
  \item \textbf{Interval selection.} At each iteration, sort the
  evaluated points by $\alpha$ and consider every adjacent pair
  $(\alpha_i, \alpha_{i+1})$. Prioritize intervals whose
  forget-probability range overlaps the focus region
  $\mathcal{B}_f = [0.05,\, 0.70]$, where the practically relevant
  part of the trade-off lies; among intervals with the same overlap,
  select the one with the largest gap in forget probability.
  \item \textbf{Refinement.} Evaluate the geometric midpoint
  $\alpha_{\mathrm{new}} = \sqrt{\alpha_i\,\alpha_{i+1}}$, i.e.\ the
  midpoint of the interval in $\log\alpha$ space.
  \item \textbf{Stopping.} Repeat until the prescribed budget of
  evaluation points is reached or no interval contains a meaningful
  unresolved gap.
\end{enumerate}
The procedure keeps the two endpoint evaluations and concentrates the
remaining budget in the steep region of the curve, producing a dense
estimate of the Pareto front without a large uniform sweep.

\paragraph{Aggregation across seeds.}
After evaluation, the points of each seed are sorted by $\alpha$. To
aggregate the five seeds, each seed-specific curve is interpolated
onto a common forget-probability grid over the seeds' shared support;
the corresponding retain values are then averaged across seeds, and
error bars are computed from the five independent runs. This yields
comparable mean curves and uncertainty estimates even though the
adaptive search evaluates different $\alpha$ values in different
seeds.


\section{Extended Limitations and Future Work}
\label{app:limitations_full}

This appendix expands the limitations summarized in
Section~\ref{sec:conclusion}.

\paragraph{Scope of the evaluation.}
Our experiments cover TOFU and three relatively small models. Although
Qwen, Llama, and Gemma represent distinct families, further work is
needed to establish whether the same behavior holds for larger models
and less controlled datasets. We have not yet completed the evaluation
on MUSE, which would require concept groups at the document or chunk
level rather than at the author level, and on WMDP-Bio. Our theory
also predicts where the advantage should shrink: the energy-bias
argument (Appendix~\ref{App:intro_th}) implies that \textsc{SCALPEL}
helps most when target concepts carry low energy in representation
space, so the gap with existing methods may narrow on datasets where
targets already contribute strongly to the activations.

\paragraph{Concept granularity.}
The author-level definition of a concept fits the structure of TOFU
and the right-to-be-forgotten setting, where each person is naturally
a separate target. Other applications may need a different
granularity, a document, a fact, or a single sample. The anchor-based
formulation supports these settings in principle through suitable data
augmentation (each anchor's views providing the positives), but their
stability and computational cost remain to be studied. Relatedly, we
have not yet measured how the method behaves as the fraction of data
to unlearn grows; because the intervention acts locally on selected
directions, we expect robustness to this parameter, but it must be
verified.

\paragraph{Computational cost and design choices.}
The method depends on several design choices: the intervention layer,
the SAE architecture and latent dimension, the contrastive weight and
temperature, the augmentation procedure, the selection threshold, and
the strength $\alpha$. We reduce this dependence through per-method
hyperparameter optimization and multi-seed evaluation, but the full
protocol is expensive: it requires finetuning full and retain-only
references, caching token-level activations, training several sparse
dictionaries, and evaluating many intervention strengths. The quality
of the augmented data is itself a limitation, we rely on a fixed set
of paraphrasing and transformation procedures, and richer augmentation
could improve the stability of the extracted features and their
coverage of the target concept.

\paragraph{Scope of the theory.}
The theoretical results are deliberately local and rely on stated
idealizations. The energy-bias result assumes centered, uncorrelated
concept components and orthogonal per-concept reconstruction errors
(Appendix~\ref{App:intro_th}); the contrastive-selectivity analysis
assumes non-negative codes, controlled decoder scales, and that sparse
target--background separation is compatible with comparable
reconstruction and sparsity cost
(Appendix~\ref{app:sparse_contrastive}); and the perturbation bound
controls only the expected representation-level footprint of the
intervention, not downstream behavior
(Appendix~\ref{App:score_unlearing}). These assumptions make the
mechanisms analyzable but do not fully capture the geometry of
language-model representations, and none of the results guarantees
complete forgetting or behavioral preservation.

\paragraph{Score geometry.}
The score $s_q(G_a)$ uses activation magnitudes and decoder norms but
ignores the orientation of $d_q$ relative to the local forget and
background manifolds, which likely explains why features with similar
scores can produce different unlearning effects
(Section~\ref{sec:score-results}). A geometry-aware score could
compare each decoder direction with local principal directions of the
two manifolds, and a geometry-aware intervention could move the
activation along, or back toward, the retain manifold instead of
applying a straight subtraction, potentially approaching the behavior
of a model trained only on retained data. Both require repeated local
manifold estimation (e.g.\ local PCA), making selection and
intervention substantially more expensive; we view them as promising
extensions rather than free improvements.

\paragraph{Shared features across targets.}
Several forget authors select the same latent features. This sharing is useful, it
means forgetting all ten authors needs far fewer than fifty
directions, reducing memory and inference cost, but it may also
couple targets: removing a shared feature for one author could affect
another author represented by the same coordinate. Further analysis is
needed to determine whether these features capture genuinely shared
forget structure or introduce unwanted interference.

\paragraph{Inference overhead and sequential unlearning.}
\textsc{SCALPEL} adds a small inference-time overhead: the SAE encoder
and the selected decoder directions must remain available so the
intervention can be applied at the chosen layer. Sequential unlearning
is an open problem: after unlearning one group, removing a second
currently requires either retraining \textsc{SCALPEL} on the combined
forget set or stacking a second intervention, which increases
inference cost with every new request. Making the method scale to
repeated unlearning requests is an important direction.

\paragraph{Context-aware extraction.}
The encoder processes token representations individually. These
already carry contextual information from earlier transformer layers,
but the extraction mechanism does not explicitly model the full sample
context; a context-aware encoder could better isolate associations
that only become identifiable at the sentence or document level.

\paragraph{Behavioral versus parametric removal.}
Finally, two limitations concern what ``unlearning'' means here.
First, existing metrics mainly measure output behavior and cannot
reliably determine whether the target information is still stored
internally or recoverable through alternative prompts or attacks;
better evaluations of true removal versus hiding are needed. Second,
\textsc{SCALPEL} modifies activations at inference and does not erase
the parametric trace: the original information may persist at other
layers or become accessible if the intervention is disabled, bypassed,
or challenged. Our results therefore establish selective behavioral
and representation-level unlearning \emph{under the proposed
intervention}, not certified deletion from the model. A natural next
step is to use the extracted features to guide permanent parameter
updates while preserving the selectivity of the representation-level
approach; we view the present work as a first step toward bridging
mechanistic interpretability and machine unlearning.

\end{document}